\newtheorem{theorem}{Theorem}
\newtheorem{proposition}{Proposition}
\newtheorem{corollary}{Corollary}
\newtheorem{definition}{Definition}
\newtheorem{remark}{Remark}
\newcommand{\R}{\mathbb{R}}
\newcommand{\eps}{\varepsilon}
\newcommand{\bx}{\bm{x}}
\newcommand{\ba}{\bm{a}}
\newcommand{\bb}{\bm{b}}
\newcommand{\bc}{\bm{c}}
\newcommand{\br}{\bm{r}}
\newcommand{\bmu}{\bm{\mu}}
\newcommand{\blam}{\bm{\lambda}}
\newcommand{\sign}{\operatorname{sign}}
\newcommand{\relu}{\operatorname{ReLU}}
\title{\textbf{Müntz-Szász Networks}: \\[0.3em] 
\large Neural Architectures with Learnable Power-Law Bases}
\author{
Gnankan Landry Regis N'guessan$^{1,2,3}$ \\[0.5em]
$^1$Axiom Research Group \\
$^2$Department of Applied Mathematics and Computational Science, \\
The Nelson Mandela African Institution of Science and Technology (NM-AIST), \\
Arusha, Tanzania \\
$^3$African Institute for Mathematical Sciences (AIMS), \\
Research and Innovation Centre (RIC), Kigali, Rwanda \\[0.5em]
\texttt{rnguessan@aimsric.org}
}
\date{}
\begin{document}

\maketitle


\begin{abstract}
Standard neural network architectures employ fixed activation functions (ReLU, tanh, sigmoid) that are poorly suited for approximating functions with singular or fractional power behavior, a structure that arises ubiquitously in physics, including boundary layers, fracture mechanics, and corner singularities. We introduce \textbf{Müntz-Szász Networks (MSN)}, a novel architecture that replaces fixed smooth activations with learnable fractional power bases grounded in classical approximation theory. Each MSN edge computes $\phi(x) = \sum_k a_k |x|^{\mu_k} + \sum_k b_k \sign(x)|x|^{\lambda_k}$, where the exponents $\{\mu_k, \lambda_k\}$ are learned alongside the coefficients. We prove that MSN inherits universal approximation from the Müntz-Szász theorem and establish novel approximation rates: for functions of the form $|x|^\alpha$, MSN achieves error $\mathcal{O}(|\mu - \alpha|^2)$ with a \emph{single} learned exponent, whereas standard multilayer perceptrons (MLPs) require $\mathcal{O}(\epsilon^{-1/\alpha})$ neurons for comparable accuracy. On supervised regression with singular target functions, MSN achieves \textbf{5-8$\times$ lower error} than MLPs with \textbf{10$\times$ fewer parameters}. Physics-informed neural networks (PINNs) represent a particularly demanding application for singular function approximation; on PINN benchmarks including a singular ordinary differential equation (ODE) and stiff boundary-layer problems, MSN achieves \textbf{3-6$\times$ improvement} while learning interpretable exponents that match the known solution structure. Our results demonstrate that theory-guided architectural design can yield dramatic improvements for scientifically-motivated function classes.
\end{abstract}


\vspace{0.5em}
\noindent\textbf{Keywords:} neural networks, approximation theory, physics-informed neural networks, Müntz-Szász theorem, Müntz spaces, learnable activations, singular functions, power-law bases, fractional exponents


\section{Introduction}
\label{sec:introduction}

Neural networks have achieved remarkable success across diverse domains, yet their effectiveness depends critically on matching architectural inductive biases to problem structure. Standard architectures employ smooth activation functions (tanh, sigmoid) or piecewise-linear functions (ReLU), which are universal approximators but poorly suited for functions with singular or fractional power behavior. Such functions arise ubiquitously in physics: boundary layers exhibit $\mathcal{O}(\epsilon^{1/2})$ thickness scaling \citep{schlichting2016boundary}, crack-tip stresses scale as $r^{-1/2}$ in fracture mechanics \citep{anderson2005fracture}, and corner singularities in elliptic partial differential equations (PDEs) produce solutions of the form $r^{\pi/\omega}$ where $\omega$ is the corner angle \citep{grisvard1992elliptic}.

Physics-informed neural networks (PINNs) \citep{raissi2019physics} embed physical laws directly into the learning objective, enabling mesh-free solutions to differential equations. However, PINNs inherit the approximation limitations of their underlying architectures. For a function $f(x) = |x|^\alpha$ with $0 < \alpha < 1$, achieving approximation error $\epsilon$ requires $\mathcal{O}(\epsilon^{-1/\alpha})$ ReLU neurons \citep{yarotsky2017error}, an exponential cost in the singularity strength. This limitation manifests in practice as PINN failures on stiff boundary-layer problems \citep{krishnapriyan2021characterizing} and high errors near singularities.

We propose a fundamentally different approach: rather than approximating singular functions with smooth or piecewise-linear bases, we \emph{learn the exponents directly}. Our architecture is grounded in the classical Müntz-Szász theorem \citep{muntz1914, szasz1916}, which characterizes when power functions with arbitrary real exponents form a complete basis:

\begin{theorem}[Müntz-Szász, 1914]
\label{thm:muntz-intro}
The span of $\{1, x^{\lambda_1}, x^{\lambda_2}, \ldots\}$ with $0 < \lambda_1 < \lambda_2 < \cdots$ is dense in $C[0,1]$ if and only if $\sum_{k=1}^\infty 1/\lambda_k = \infty$.
\end{theorem}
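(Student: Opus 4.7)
The plan is to reduce the density statement to a question about zeros of bounded analytic functions on the disk via Hahn--Banach duality and Blaschke's theorem. First I would use the Hahn--Banach theorem and the Riesz representation theorem to reformulate density: the span of $\{1, x^{\lambda_1}, x^{\lambda_2}, \dots\}$ is dense in $C[0,1]$ if and only if the only finite signed Borel measure $\nu$ on $[0,1]$ satisfying $\int_0^1 d\nu = 0$ and $\int_0^1 x^{\lambda_k}\, d\nu(x) = 0$ for every $k$ is the zero measure. This translates a topological problem into an analytic one about vanishing moments.

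Next, given such a $\nu$, I would introduce the function
\begin{equation*}
F(z) = \int_0^1 x^z \, d\nu(x), \qquad \Re(z) > 0,
\end{equation*}
and show that $F$ is bounded and holomorphic on the right half-plane $\{\Re(z) > 0\}$, with $F(\lambda_k) = 0$ for every $k$ (and, by the constant-term condition, a limiting zero at $z=0$). The key transfer step is to compose with the conformal map $\psi(w) = \frac{1-w}{1+w}$ sending the unit disk $\mathbb{D}$ to the right half-plane; then $G = F \circ \psi$ is a bounded holomorphic function on $\mathbb{D}$ whose zeros $w_k = \psi^{-1}(\lambda_k) = \frac{1-\lambda_k}{1+\lambda_k}$ lie inside $\mathbb{D}$.

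Now I would invoke Blaschke's theorem: if a bounded holomorphic function on $\mathbb{D}$ does not vanish identically, its zeros $\{w_k\}$ must satisfy $\sum_k (1 - |w_k|) < \infty$. A routine expansion shows
\begin{equation*}
1 - |w_k| = 1 - \left|\tfrac{1-\lambda_k}{1+\lambda_k}\right| \;\asymp\; \frac{1}{\lambda_k}
\end{equation*}
for large $\lambda_k$, so the Blaschke sum diverges precisely when $\sum_k 1/\lambda_k = \infty$. Under that hypothesis, $G \equiv 0$, hence $F \equiv 0$, and a standard uniqueness argument (e.g., taking derivatives at integer points to recover $\int x^n\, d\nu$ and invoking the classical Weierstrass/Hausdorff moment result) forces $\nu = 0$, yielding density.

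For the converse, assuming $\sum_k 1/\lambda_k < \infty$, I would construct an explicit nonzero bounded holomorphic $G$ on $\mathbb{D}$ vanishing at the prescribed Blaschke sequence (the Blaschke product itself), transport it back by $G \circ \psi^{-1}$ to obtain a bounded holomorphic $F$ on the right half-plane with $F(\lambda_k) = 0$, and represent $F$ as the Mellin-type integral of some measure $\nu$ on $[0,1]$ (this representation is the subtle step, typically achieved by first verifying $F$ has the required growth along vertical lines and then applying an inverse Mellin transform). That $\nu$ then furnishes a nonzero element of the annihilator, blocking density. The main obstacle is the last representation step: ensuring that the constructed analytic function actually arises from a finite signed measure on $[0,1]$, which requires careful control of $F$ at infinity and continuity of $\nu$ at the endpoints; everything else is bookkeeping around Blaschke's theorem.
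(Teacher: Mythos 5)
The paper does not actually prove this statement---it is the classical Müntz-Szász theorem, cited from Müntz (1914) and Szász (1916) as background---so there is no in-paper argument to compare against. Your outline is the standard textbook proof (essentially Rudin, \emph{Real and Complex Analysis}, Thm.~15.26, also Borwein--Erdélyi): Hahn--Banach plus Riesz representation reduces density to the vanishing of every annihilating measure, the transform $F(z)=\int_0^1 x^z\,d\nu(x)$ is bounded and holomorphic on the right half-plane with zeros at the $\lambda_k$, and the Blaschke condition under $w\mapsto(1-w)/(1+w)$ converts $\sum_k 1/\lambda_k=\infty$ into divergence of $\sum_k(1-|w_k|)$, forcing $F\equiv 0$. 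The computation $1-|w_k|\asymp 1/\lambda_k$ is right; the finitely many $\lambda_k<1$ are harmless since the sequence increases, and the case of bounded $\lambda_k$ (zeros accumulating in the interior, identity theorem) is subsumed by the Blaschke criterion. As a strategy this is correct and is the canonical route.

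Three places where the sketch needs filling in. First, to get $\nu=0$ from $F\equiv0$ you simply \emph{evaluate} $F$ at the positive integers---$F(n)=\int x^n\,d\nu=0$ for $n\ge1$, plus $\int d\nu=0$, kills all polynomial moments and Weierstrass finishes; ``taking derivatives at integer points'' is a misstatement. You should also say a word about a possible atom of $\nu$ at $x=0$, since $x^z\to0$ there for $\Re z>0$ while $x^0=1$, so $F$ does not see such an atom and it must be accounted for separately via the constant function. Second, and more substantively, in the converse the bare Blaschke product pulled back to the half-plane does \emph{not} have the decay needed for Mellin inversion into a measure on $[0,1]$; the standard repair is to multiply by a damping factor such as $z/(2+z)^3$ (and work on $\Re z>-1$) so that $F$ is $O(|z|^{-2})$ on vertical lines and the inverse Mellin integral produces an honest $L^1$ density $g$ with $F(z)=\int_0^1 x^z g(x)\,dx$. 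You correctly flag this as the subtle step, but as written it is the one genuine gap: without the damping factor the representation step fails. With those repairs the argument is the classical one and is sound.
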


This theorem guarantees that with appropriately chosen exponents, power functions can approximate \emph{any} continuous function, including those with fractional power structure. Our \textbf{Müntz-Szász Networks (MSN)} leverage this insight by learning the exponents $\{\mu_k, \lambda_k\}$ alongside the coefficients, enabling the network to discover and exploit the natural power structure of the target function. MSN is a general-purpose approximation architecture for functions with power-law structure; PINNs represent a particularly demanding application where this structure commonly arises.

To address these limitations, we introduce Müntz-Szász Networks and make the following contributions:

\begin{enumerate}
    \item \textbf{Architecture:} We introduce MSN, a neural network architecture where each edge computes a Müntz polynomial with learnable exponents. We develop a bounded parameterization ensuring numerical stability and propose a Müntz divergence regularizer connecting to classical approximation theory (Section~\ref{sec:method}).
    
    \item \textbf{Theory:} We prove universal approximation for MSN (Theorem~\ref{thm:uat}) and establish novel approximation rates showing MSN achieves $\mathcal{O}(\delta^2)$ error for power functions when the learned exponent is within $\delta$ of the true exponent (Theorem~\ref{thm:approx-rate}), compared to $\mathcal{O}(N^{-2\alpha})$ for MLPs (Section~\ref{sec:theory}).
    
    \item \textbf{Experiments:} On supervised regression with singular functions ($\sqrt{x}$, $|x-0.5|^{0.2}$), MSN achieves 5-8$\times$ lower error than MLPs with 10$\times$ fewer parameters. On PINN benchmarks (singular ODE, boundary-layer boundary value problems), MSN achieves 3-6$\times$ improvement while learning interpretable exponents (Section~\ref{sec:experiments}).
\end{enumerate}


\section{Background}
\label{sec:background}

\subsection{The Müntz-Szász Theorem}

The Müntz-Szász theorem, originating in the work of Müntz \citep{muntz1914} and Szász \citep{szasz1916}, provides a complete characterization of when systems of monomials with non-integer exponents form a dense subset of $C[0,1]$. Together with the Weierstrass approximation theorem, it stands as a cornerstone of constructive approximation theory, addressing the fundamental question of which function systems can represent arbitrary continuous functions. For comprehensive treatments, see \citet{borwein1995polynomials} and \citet{lorentz1996constructive}.

\begin{definition}[Müntz System]
For a sequence $\Lambda = \{0 \leq \lambda_0 < \lambda_1 < \cdots\}$, the \emph{Müntz system} is $\mathcal{M}_\Lambda = \{x^{\lambda_k}\}_{k=0}^\infty$.
\end{definition}

The divergence condition $\sum_{k=1}^\infty 1/\lambda_k = \infty$ in Theorem~\ref{thm:muntz-intro} ensures that exponents do not grow too rapidly. For example, $\Lambda = \{0, 1, 2, 3, \ldots\}$ (polynomials) trivially satisfies the condition, as does $\Lambda = \{0, 0.5, 1, 1.5, 2, \ldots\}$. However, $\Lambda = \{0, 2, 4, 8, 16, \ldots\}$ violates it, and indeed the span of $\{1, x^2, x^4, x^8, \ldots\}$ cannot approximate $x$ on $[0,1]$.

For symmetric intervals, we require the \emph{full Müntz theorem}:

\begin{theorem}[Full Müntz Theorem]
\label{thm:full-muntz}
Let $0 < \lambda_1 < \lambda_2 < \cdots$ with $\sum_k 1/\lambda_k = \infty$. Then the system 
\[
\{1\} \cup \{|x|^{\lambda_k}\}_{k=1}^\infty \cup \{\sign(x)|x|^{\lambda_k}\}_{k=1}^\infty
\]
is dense in $C[-1,1]$.
\end{theorem}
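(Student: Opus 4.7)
The plan is to reduce the problem to the classical Müntz-Szász theorem (Theorem~\ref{thm:muntz-intro}) on $[0,1]$ by splitting the target into even and odd parts, approximating each on $[0,1]$ by a standard Müntz polynomial, and then lifting the two approximants back to $[-1,1]$ through the substitutions $t\mapsto |x|$ and $t\mapsto \sign(x)\,|x|$.

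Concretely, given $f\in C[-1,1]$ and $\epsilon>0$, I would first write $f=f_e+f_o$ with $f_e(x)=\tfrac12(f(x)+f(-x))$ and $f_o(x)=\tfrac12(f(x)-f(-x))$, so that $f_e$ is even, $f_o$ is odd, and in particular $f_o(0)=0$. Setting $g:=f_e|_{[0,1]}$ and $h:=f_o|_{[0,1]}$, both are continuous on $[0,1]$ and $h(0)=0$. Next, I would invoke Theorem~\ref{thm:muntz-intro} (with the convention that $\lambda_0=0$ is adjoined so the constant belongs to the approximating span) to obtain Müntz polynomials
\[
P(t)=c_0+\sum_{k} a_k\,t^{\lambda_k}, \qquad Q(t)=\sum_{k} b_k\,t^{\lambda_k},
\]
with $\sup_{[0,1]}|g-P|<\epsilon/2$, $\sup_{[0,1]}|h-Q|<\epsilon/2$, and $Q$ chosen to have no constant term (possible because $h(0)=0$; density of $\{t^{\lambda_k}\}_{k\ge 1}$ in the subspace $\{q\in C[0,1]:q(0)=0\}$ under $\sum 1/\lambda_k=\infty$ is a standard corollary of Müntz-Szász). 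Finally, I would lift these to
\[
\widetilde P(x)=c_0+\sum_{k} a_k\,|x|^{\lambda_k}, \qquad \widetilde Q(x)=\sum_{k} b_k\,\sign(x)\,|x|^{\lambda_k},
\]
observe that $\widetilde P(x)=P(|x|)$ is even and $\widetilde Q(x)=\sign(x)\,Q(|x|)$ is odd (with $\widetilde Q$ continuous at $0$ because $Q(0)=0$), and conclude by the triangle inequality that $\sup_{[-1,1]}|f-(\widetilde P+\widetilde Q)|<\epsilon$.

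The main obstacle, and the only step that requires real care, is the behaviour at $x=0$. Because $|x|^{\lambda_k}\to 0$ as $x\to 0$ for every $\lambda_k>0$, no finite combination of the listed basis functions alone can represent a nonzero constant at the origin; the theorem as written therefore relies implicitly on a constant term being available (e.g.\ by adjoining $\lambda_0=0$, as in the standard formulations surveyed by \citet{borwein1995polynomials}), which I would flag explicitly at the start of the proof. The second subtlety is ensuring continuity of the odd lift $\widetilde Q$ at the origin, which forces $Q(0)=0$; this is guaranteed by performing the odd-part approximation inside the closed subspace of $C[0,1]$ of functions vanishing at $0$, where the constant-free Müntz system is dense precisely under the divergence hypothesis. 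Beyond these two points, the argument is a routine symmetry reduction.
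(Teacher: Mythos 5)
Your argument is correct, and it is the standard proof of the full M\"untz theorem on a symmetric interval: decompose $f$ into even and odd parts, approximate each restriction to $[0,1]$ via the classical M\"untz--Sz\'asz theorem, and lift through $t\mapsto|x|$ with the appropriate parity factor. The paper itself states Theorem~\ref{thm:full-muntz} as a cited classical fact and gives no proof, so there is nothing internal to compare against; your write-up supplies the missing argument. Your handling of the odd part is also right: since $f_o(0)=0$, the constant-free system $\{t^{\lambda_k}\}_{k\ge1}$ suffices (approximate by $c_0+p$ and observe $|c_0|=|(c_0+p)(0)-f_o(0)|\le\|c_0+p-f_o\|_\infty$, so the constant can be dropped at the cost of a factor of $2$ in the error), and $Q(0)=0$ is automatic because every $\lambda_k>0$, which gives continuity of $\sign(x)Q(|x|)$ at the origin for free.

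The one substantive point you raise deserves emphasis rather than a mere flag: the theorem \emph{as printed} is false. Every function in the span of $\{|x|^{\lambda_k}\}\cup\{\sign(x)|x|^{\lambda_k}\}$ with all $\lambda_k>0$ vanishes at $x=0$, and $\{f\in C[-1,1]:f(0)=0\}$ is a proper closed subspace (the kernel of the continuous evaluation functional at $0$), so the closure of the span is contained in it and the constant function $1$ sits at uniform distance $1$ from the span. Density in all of $C[-1,1]$ therefore requires adjoining the constant (equivalently $\lambda_0=0$), exactly as in the paper's own Theorem~\ref{thm:muntz-intro} and as in the standard formulations in \citet{borwein1995polynomials}. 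Your proof correctly incorporates this fix via the $c_0$ term in $P$; the statement of Theorem~\ref{thm:full-muntz} should be amended to include $\{1\}$ in the system (this does not affect the MSN architecture, which already carries bias terms $c_j$ playing the role of the constant).
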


The constant function $1$ is essential for approximating functions $f$ with $f(0) \neq 0$, since $|0|^{\lambda} = 0$ for all $\lambda > 0$. The even functions $|x|^{\lambda}$ and odd functions $\sign(x)|x|^{\lambda}$ together decompose any continuous function on symmetric domains.

\begin{remark}[Relationship to Classical Theory]
\label{rem:classical}
We emphasize that MSNs do not implement the Müntz-Szász theorem in its classical form. The theorem concerns uniform density of \emph{infinite} monomial systems in $C[0,1]$, whereas MSNs operate with \emph{finite}, learned exponent sets, empirical loss functions (typically $L^2$), and compositional architectures. In this sense, the theorem serves as a \emph{guiding principle} rather than a direct blueprint: it motivates the use of learnable power bases and informs the design of our regularizer, but the network itself is a finite, trainable approximation. Empirical risk minimization and compositional architectures are necessary to scale beyond classical approximation settings and to integrate with modern learning pipelines.
\end{remark}

\subsection{Physics-Informed Neural Networks}

PINNs \citep{raissi2019physics} solve differential equations by training neural networks to satisfy physical laws. For a PDE $\mathcal{N}[u](\bx) = 0$ on domain $\Omega$ with boundary conditions $\mathcal{B}[u](\bx) = g(\bx)$ on $\partial\Omega$, the PINN loss is:
\begin{equation}
\label{eq:pinn-loss}
\mathcal{L} = \frac{1}{N_c}\sum_{i=1}^{N_c} |\mathcal{N}[u_\theta](\bx_i)|^2 + \lambda_{BC} \frac{1}{N_b}\sum_{j=1}^{N_b} |\mathcal{B}[u_\theta](\bx_j) - g(\bx_j)|^2,
\end{equation}
where derivatives are computed via automatic differentiation.

PINNs face several challenges for singular solutions:
\begin{itemize}
    \item \textbf{Spectral bias:} Networks learn low-frequency components before high-frequency ones \citep{rahaman2019spectral, wang2021eigenvector}, causing slow convergence near singularities.
    \item \textbf{Gradient pathologies:} Imbalanced loss terms cause unstable training \citep{wang2021understanding}.
    \item \textbf{Representation limits:} Smooth activations cannot efficiently represent fractional powers \citep{yarotsky2017error}.
\end{itemize}

MSN addresses the third challenge directly by learning appropriate power bases, potentially alleviating the first two as well.


\section{Müntz-Szász Networks}
\label{sec:method}

We now introduce Müntz-Szász Networks (MSN), beginning with the basic building block and progressing to the full architecture.

\subsection{Architecture}

\begin{figure}[t]
    \centering
    \includegraphics[width=\textwidth]{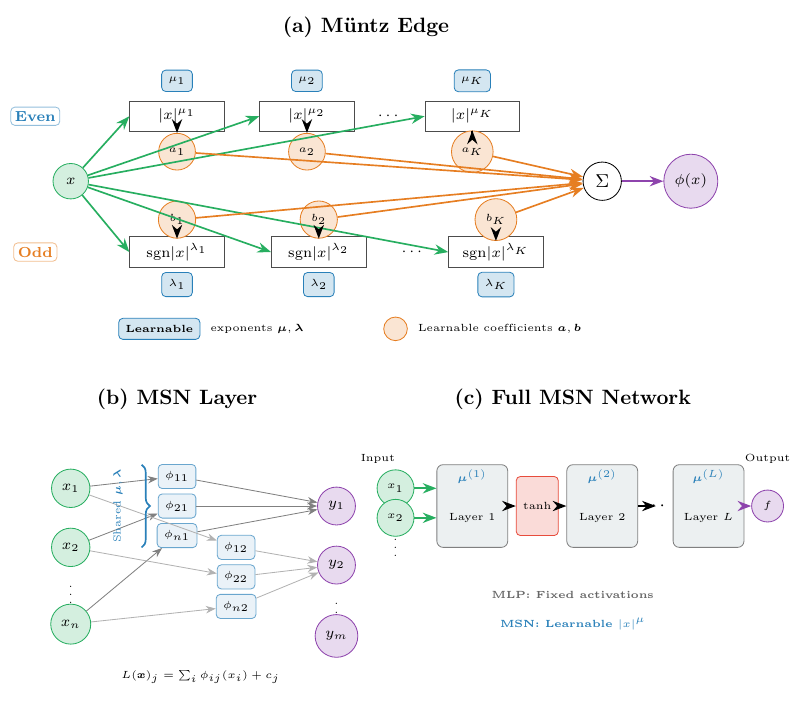}
    \caption{\textbf{Müntz-Szász Network architecture.} (a) A Müntz edge computes $\phi(x) = \sum_k a_k |x|^{\mu_k} + \sum_k b_k \sign(x)|x|^{\lambda_k}$ with learnable exponents $\bmu, \blam$ and coefficients $\ba, \bb$. (b) An MSN layer connects all input-output pairs via Müntz edges with shared exponents. (c) Full MSN stacks layers with inter-layer nonlinearities ($\tanh$). Unlike standard networks with fixed activations, MSN learns the functional form of each edge.}
    \label{fig:architecture}
\end{figure}

\begin{figure}[!htbp]
    \centering
    \includegraphics[width=\textwidth]{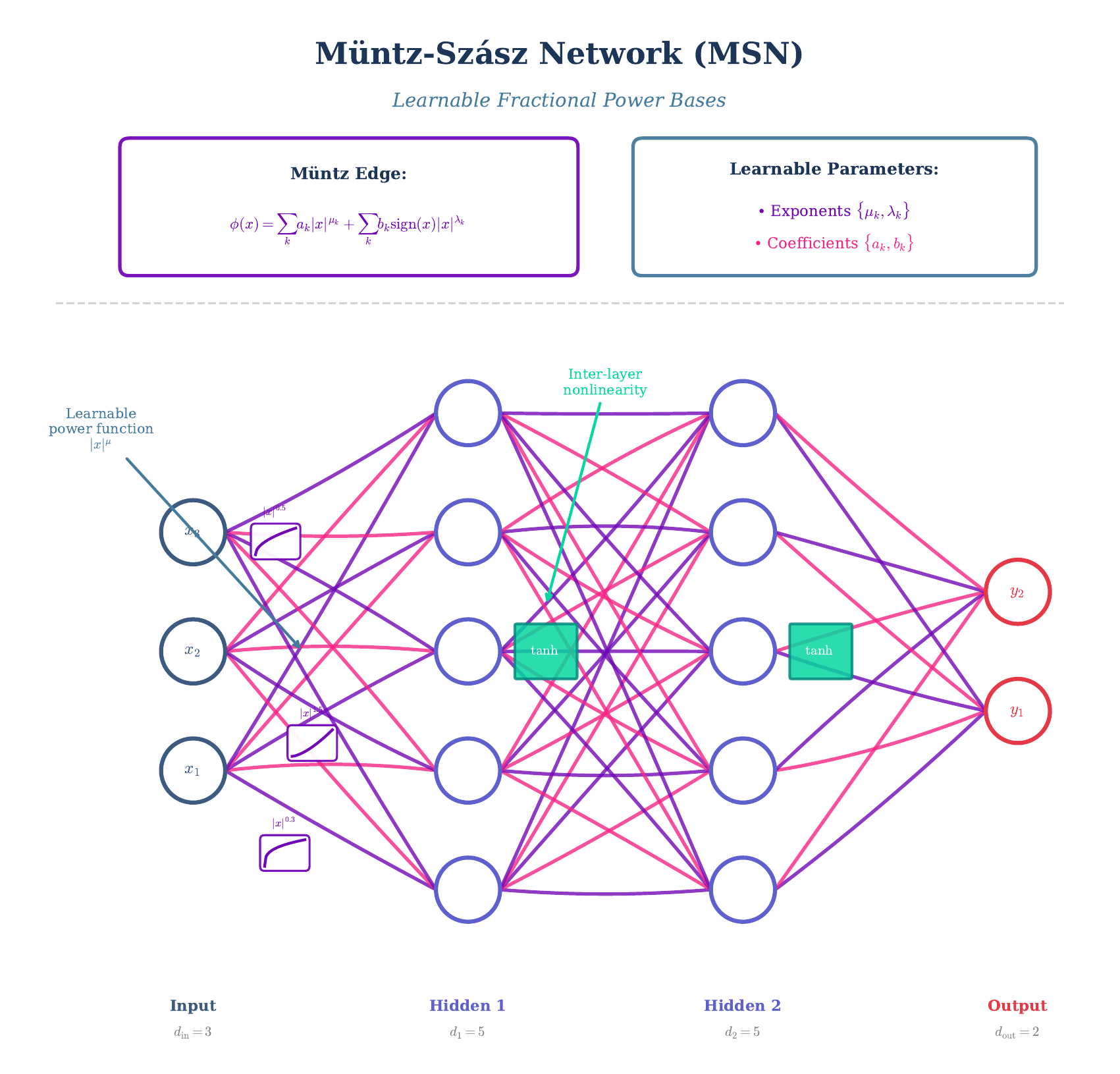}
    \caption{\textbf{Full MSN architecture with learnable power functions.} The network transforms inputs through MSN layers connected by $\tanh$ nonlinearities. Small inset plots on representative edges visualize the learned power functions (e.g., $|x|^{0.3}$, $|x|^{0.5}$, $|x|^{1.5}$). This architecture enables MSN to discover and exploit the natural power structure of target functions.}
    \label{fig:architecture-full}
\end{figure}

\begin{figure}[!htbp]
    \centering
    \includegraphics[width=\textwidth]{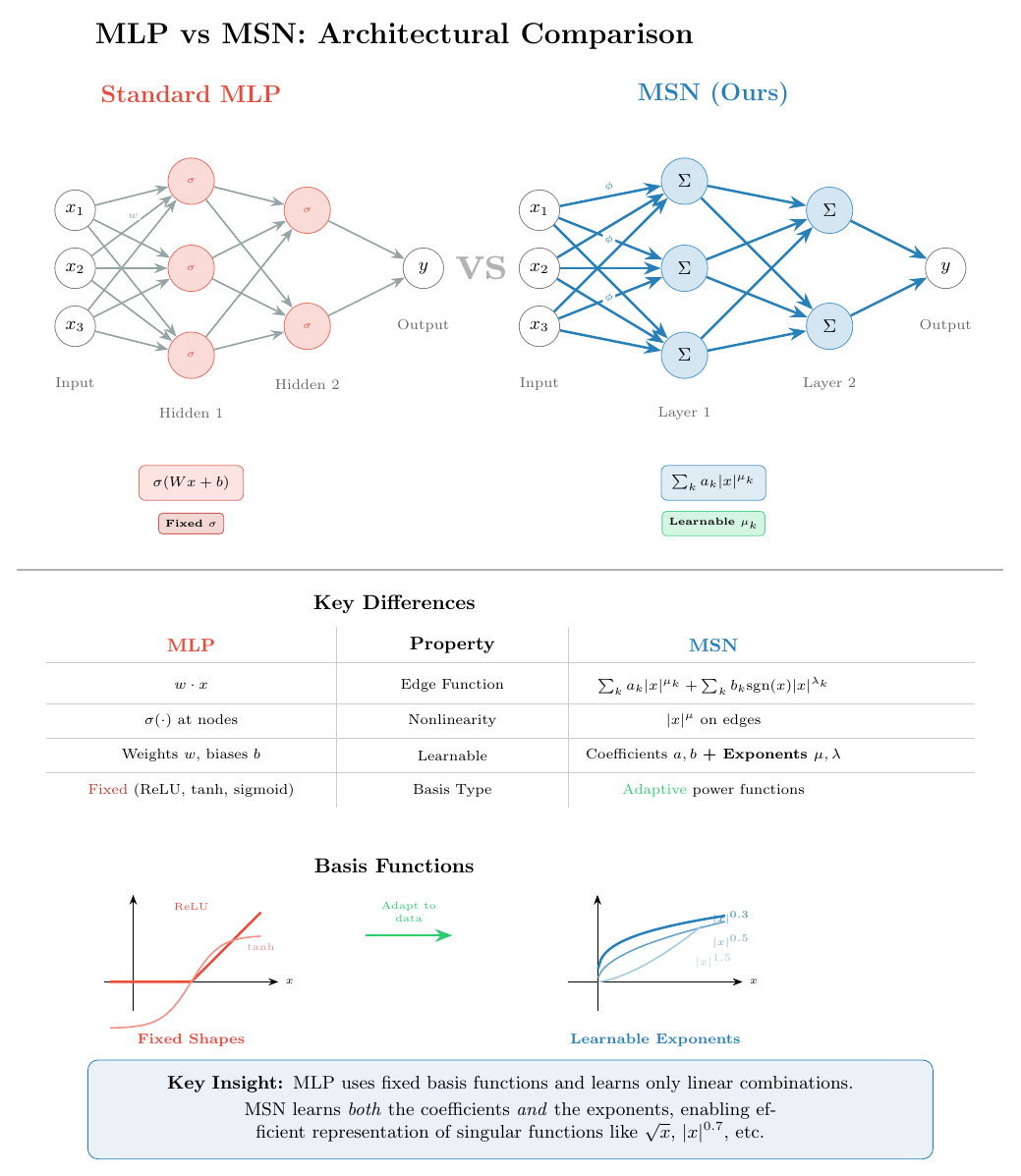}
    \caption{\textbf{Comparison of MLP and MSN architectures.} 
    MLP uses fixed activation functions (ReLU, tanh) at nodes and learns 
    only weights on edges. MSN uses learnable power functions $|x|^{\mu_k}$ 
    on edges, learning both coefficients and exponents. This enables 
    efficient representation of singular functions.}
    \label{fig:comparison}
\end{figure}

\begin{definition}[Müntz Edge]
\label{def:muntz-edge}
A \emph{Müntz edge} $\phi: \R \to \R$ is defined as
\begin{equation}
\label{eq:muntz-edge}
\phi(x; \ba, \bb, \bmu, \blam) = \sum_{k=1}^{K_e} a_k |x|^{\mu_k} + \sum_{k=1}^{K_o} b_k \sign(x) |x|^{\lambda_k},
\end{equation}
where $\bmu = (\mu_1, \ldots, \mu_{K_e}) \in \R_{>0}^{K_e}$ are \emph{even exponents}, $\blam = (\lambda_1, \ldots, \lambda_{K_o}) \in \R_{>0}^{K_o}$ are \emph{odd exponents}, and $\ba, \bb$ are learnable coefficients.
\end{definition}

The decomposition into even ($|x|^\mu$) and odd ($\sign(x)|x|^\lambda$) terms mirrors the parity decomposition of functions on symmetric domains and is justified by Theorem~\ref{thm:full-muntz}.

\begin{definition}[MSN Layer]
\label{def:msn-layer}
An \emph{MSN layer} $L: \R^{d_{\text{in}}} \to \R^{d_{\text{out}}}$ is:
\begin{equation}
\label{eq:msn-layer}
L(\bx)_j = \sum_{i=1}^{d_{\text{in}}} \phi_{ij}(x_i) + c_j, \quad j = 1, \ldots, d_{\text{out}},
\end{equation}
where each $\phi_{ij}$ is a Müntz edge and $c_j$ are bias terms.
\end{definition}

\begin{definition}[Müntz-Szász Network]
\label{def:msn}
A \emph{Müntz-Szász Network} with $L$ layers is:
\begin{equation}
\label{eq:msn}
f_{\text{MSN}}(\bx) = L_L \circ \tau \circ L_{L-1} \circ \tau \circ \cdots \circ \tau \circ L_1(\bx),
\end{equation}
where $\tau = \tanh$ is applied elementwise between layers.
\end{definition}

\paragraph{Exponent Sharing.} In practice, we share exponents $(\bmu, \blam)$ across all edges within a layer, reducing exponent parameters from $\mathcal{O}(d_{\text{in}} \cdot d_{\text{out}} \cdot K)$ to $\mathcal{O}(K)$ per layer while maintaining distinct coefficients $(\ba, \bb)$ per edge.

\subsection{Exponent Parameterization}
\label{sec:exponent-param}

Directly optimizing exponents in $\R_{>0}$ is numerically unstable: gradients can push exponents negative (undefined) or extremely large (overflow for $|x| > 1$). We propose a \emph{bounded parameterization} ensuring well-behaved exponents.

\begin{definition}[Bounded Exponent Map]
\label{def:bounded-exp}
For raw parameters $\br \in \R^K$, maximum exponent $p_{\max} > 0$, and margin $\eps > 0$:
\begin{equation}
\label{eq:bounded-exp}
\mu_k = \eps + (p_{\max} - 2\eps) \cdot \text{sort}(\sigma(\br))_k,
\end{equation}
where $\sigma(r) = 1/(1 + e^{-r})$ is the sigmoid function.
\end{definition}

\begin{proposition}[Properties of Bounded Parameterization]
\label{prop:bounded-properties}
The map $\br \mapsto \bmu$ satisfies:
\begin{enumerate}
    \item \textbf{Strict positivity:} $\mu_k \in (\eps, p_{\max} - \eps)$ for all $k$.
    \item \textbf{Ordering:} $\mu_1 < \mu_2 < \cdots < \mu_K$ almost surely.
    \item \textbf{Differentiability:} The map is differentiable almost everywhere.
    \item \textbf{Gradient bound:} $\|\partial \bmu / \partial \br\|_\infty \leq p_{\max}/4$.
\end{enumerate}
\end{proposition}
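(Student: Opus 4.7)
The plan is to verify the four items in sequence, exploiting the fact that $\text{sort}$ acts as a locally constant permutation on the open set where the coordinates of $\sigma(\br)$ are distinct, so standard calculus applies componentwise on that set.

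First I would dispatch strict positivity (item 1): since $\sigma:\R\to(0,1)$ strictly, every coordinate of $\text{sort}(\sigma(\br))$ lies in $(0,1)$, and substituting into the affine map $t\mapsto\eps+(p_{\max}-2\eps)t$, which is strictly increasing whenever $\eps<p_{\max}/2$, gives $\mu_k\in(\eps,p_{\max}-\eps)$. For the ordering (item 2), $\text{sort}$ yields a nondecreasing sequence by definition; strict inequality fails only when two sigmoid values coincide, and since $\sigma$ is injective this forces $r_i=r_j$ for some $i\neq j$. The exceptional set $\{\br\in\R^K:r_i=r_j\text{ for some }i\neq j\}$ is a finite union of hyperplanes and has Lebesgue measure zero, so $\mu_1<\cdots<\mu_K$ holds almost surely, and the affine rescaling preserves the strict inequality.

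For differentiability (item 3), I would partition $\R^K$ into the open cells $U_\pi=\{\br:r_{\pi(1)}<\cdots<r_{\pi(K)}\}$ indexed by $\pi\in S_K$; the union $\bigcup_\pi U_\pi$ is open and of full measure by the argument above. On each $U_\pi$ the sort acts as the fixed permutation $\pi^{-1}$, so the map reduces to $\mu_k=\eps+(p_{\max}-2\eps)\sigma(r_{\pi(k)})$, which is $C^\infty$ in $\br$; hence the map is differentiable off a measure-zero set. For the gradient bound (item 4), the Jacobian on $U_\pi$ vanishes away from the entries $(k,\pi(k))$ and equals $(p_{\max}-2\eps)\sigma'(r_{\pi(k)})$ there. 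The elementary inequality $\sigma'(r)=\sigma(r)(1-\sigma(r))\leq 1/4$ (attained at $r=0$) then yields $|\partial\mu_k/\partial r_j|\leq(p_{\max}-2\eps)/4\leq p_{\max}/4$, and taking the entrywise supremum over $j,k$ gives the stated $\|\cdot\|_\infty$ bound.

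The only mildly delicate point is making the phrase \emph{differentiable almost everywhere} precise in the presence of $\text{sort}$: I expect the cleanest framing is to note that $\text{sort}$ is globally Lipschitz and smooth on the open dense union of the cells $U_\pi$, and to interpret the Jacobian bound as an essential supremum on that set. Once the cell decomposition is in place, the remainder is a routine application of the chain rule and the standard sigmoid derivative estimate, so I do not anticipate any serious obstacle.
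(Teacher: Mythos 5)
Your proof is correct and follows essentially the same route as the paper's: sigmoid range plus the affine rescaling for positivity, a measure-zero coincidence set for strict ordering and differentiability, and the bound $\sigma'(r)\leq 1/4$ combined with the $\{0,1\}$-valued sorting Jacobian for the gradient estimate. Your explicit cell decomposition into the regions $U_\pi$ is a slightly more careful formalization of what the paper states informally, but it is not a different argument.
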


The proof is given in Appendix~\ref{app:proof-bounded}. The ordering property preserves a canonical Müntz-like structure and facilitates enforcing the divergence condition via regularization, while the gradient bound prevents instabilities during optimization.

\subsection{Müntz Divergence Regularizer}
\label{sec:regularizer}

The Müntz condition $\sum_k 1/\lambda_k = \infty$ ensures basis completeness. For finite $K$, we encourage configurations that would satisfy this condition asymptotically.

\begin{definition}[Müntz Divergence]
\label{def:muntz-div}
For exponents $\bmu \in \R_{>0}^{K_e}$ and $\blam \in \R_{>0}^{K_o}$:
\begin{equation}
\label{eq:muntz-div}
D(\bmu, \blam) = \sum_{k=1}^{K_e} \frac{1}{\mu_k} + \sum_{k=1}^{K_o} \frac{1}{\lambda_k}.
\end{equation}
\end{definition}

\begin{definition}[Müntz Regularizer]
\label{def:muntz-reg}
For threshold $C > 0$:
\begin{equation}
\label{eq:muntz-reg}
\mathcal{R}_{\text{Müntz}}(\bmu, \blam) = \relu(C - D(\bmu, \blam)).
\end{equation}
\end{definition}

\begin{proposition}[Regularizer Effect]
\label{prop:reg-effect}
The regularizer $\mathcal{R}_{\text{Müntz}}$ encourages:
\begin{enumerate}
    \item \textbf{Small exponents:} To maximize $D$, some $\mu_k$ should be small (near zero).
    \item \textbf{Diverse exponents:} Spreading exponents increases $D$ compared to concentration at a single value.
\end{enumerate}
\end{proposition}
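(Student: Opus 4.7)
The plan is to analyze the regularizer inside its \emph{active region} $\{(\bmu,\blam) : D(\bmu,\blam) < C\}$, where $\mathcal{R}_{\text{Müntz}} = C - D$ and so decreasing the regularizer is equivalent to increasing $D$; outside this region $\mathcal{R}_{\text{Müntz}} \equiv 0$ and no gradient is exerted. Both claims then reduce to statements about how $D(\bmu,\blam) = \sum_k 1/\mu_k + \sum_k 1/\lambda_k$ responds to two distinct operations on the exponent spectrum: shrinking an individual exponent, and redistributing mass while holding the mean fixed.

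For the first claim I would compute the partial derivatives $\partial D/\partial \mu_k = -1/\mu_k^2$ and $\partial D/\partial \lambda_k = -1/\lambda_k^2$, both strictly negative and unbounded as the exponent tends to zero. Consequently $\partial \mathcal{R}_{\text{Müntz}}/\partial \mu_k = 1/\mu_k^2 > 0$ in the active region, so gradient descent drives each active exponent toward smaller values. Combined with the lower bound $\mu_k > \eps$ from the bounded parameterization (Proposition~\ref{prop:bounded-properties}), this pressure tends to accumulate at least one exponent near the lower end of the admissible range $(\eps, p_{\max}-\eps)$, which is the "small exponent" behavior one needs for approximating $|x|^\alpha$ with small $\alpha$.

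For the second claim I would invoke strict convexity of the map $t \mapsto 1/t$ on $(0,\infty)$. Fixing the arithmetic mean $\bar\mu = K_e^{-1}\sum_k \mu_k$, Jensen's inequality yields
\[
\frac{1}{K_e}\sum_{k=1}^{K_e} \frac{1}{\mu_k} \;\geq\; \frac{1}{\bar\mu},
\]
with equality if and only if all $\mu_k$ coincide; multiplying by $K_e$ shows that among all even-exponent configurations with mean $\bar\mu$, the concentrated one $\mu_1 = \cdots = \mu_{K_e} = \bar\mu$ strictly minimizes the even contribution to $D$, and the same holds for $\blam$. A stronger phrasing uses Schur-convexity of $(\mu_1,\ldots,\mu_{K_e}) \mapsto \sum_k 1/\mu_k$: whenever $\bmu'$ majorizes $\bmu$, $D(\bmu',\blam) \geq D(\bmu,\blam)$, which captures "spreading" as transporting mass from central to extreme exponents.

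The main obstacle is purely one of formalization: "diverse" and "spreading" only become rigorous after fixing an invariant (mean, sum, or majorization order), and the boundary constraint $\mu_k \in (\eps, p_{\max}-\eps)$ must be tracked when turning the first-order analysis of claim 1 into a statement about the limiting configuration produced by training. Both subtleties dissolve once the variational inequalities for $D$ (convexity and Schur-convexity) are cleanly separated from the optimization statement about the sign and magnitude of $\nabla \mathcal{R}_{\text{Müntz}}$ under the bounded parameterization.
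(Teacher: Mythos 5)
Your proposal is correct and follows essentially the same route as the paper: monotonicity of $D$ in each exponent for the first claim, and convexity of $t \mapsto 1/t$ for the second. Your version is in fact slightly tighter --- the paper's diversity step asserts $\sum_k 1/\mu_k > K/\mu^*$ merely from $\mu_1 < \mu^* < \mu_K$, which can fail without the fixed-mean (or majorization) invariant that you make explicit via Jensen's inequality and Schur-convexity.
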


Both effects align with the Müntz condition's requirement for diverse, slowly-growing exponents. We set $C = 2$ in experiments, which for $K = 6$ exponents requires an average exponent value below $3.0$, sufficient to encourage at least one small exponent near zero while allowing others to capture higher-order behavior. Preliminary experiments showed robust performance for $C \in [1, 5]$.

\begin{remark}
This regularizer is not required for universal approximation but improves optimization stability and encourages diverse exponent configurations in finite networks. It serves as a soft constraint guiding the learned exponents toward configurations consistent with the classical theory.
\end{remark}

\subsection{Training Objective and Stabilization}
\label{sec:training}

\paragraph{Training Objective.} For PINNs:
\begin{equation}
\label{eq:total-loss}
\mathcal{L} = \mathcal{L}_{\text{PDE}} + \lambda_{\text{BC}} \mathcal{L}_{\text{BC}} + \beta_1 \mathcal{R}_{\text{Müntz}} + \beta_2 (\|\ba\|_1 + \|\bb\|_1),
\end{equation}
where the $L^1$ penalty promotes coefficient sparsity. For supervised regression, we use $\mathcal{L}_{\text{MSE}}$ in place of $\mathcal{L}_{\text{PDE}} + \lambda_{\text{BC}}\mathcal{L}_{\text{BC}}$.

\paragraph{Training Stabilization.} Learning exponents introduces optimization challenges. We employ three techniques:

\begin{enumerate}
    \item \textbf{Two-time-scale optimization:} Exponents evolve slowly ($\eta_{\text{exp}} = 0.02\eta$) while coefficients adapt faster ($\eta_{\text{coeff}} = \eta$).
    
    \item \textbf{Exponent warmup:} During the first $T_{\text{warm}}$ steps, exponents are frozen to allow coefficients to find a reasonable configuration.
    
    \item \textbf{Exponent gradient clipping:} We clip exponent gradients separately with a small threshold $\delta \in [0.03, 0.1]$.
\end{enumerate}

Algorithm~\ref{alg:msn-training} summarizes the training procedure.

\begin{algorithm}[t]
\caption{MSN Training for PINNs}
\label{alg:msn-training}
\begin{algorithmic}[1]
\REQUIRE PDE operator $\mathcal{N}$, BC operator $\mathcal{B}$, collocation points $\{x_i^c\}$, boundary points $\{x_j^b\}$
\REQUIRE Learning rate $\eta$, warmup steps $T_{\text{warm}}$, exponent clip $\delta$
\STATE Initialize MSN $f_\theta$ with random coefficients $(\ba, \bb)$ and exponents $(\br_e, \br_o)$
\FOR{$t = 1, \ldots, T$}
    \STATE Compute exponents: $\bmu \gets \text{BoundedMap}(\br_e)$, $\blam \gets \text{BoundedMap}(\br_o)$
    \STATE Compute losses: $\mathcal{L}_{\text{PDE}}, \mathcal{L}_{\text{BC}}, \mathcal{R}_{\text{Müntz}}$
    \STATE $\mathcal{L} \gets \mathcal{L}_{\text{PDE}} + \lambda_{\text{BC}}\mathcal{L}_{\text{BC}} + \beta_1 \mathcal{R}_{\text{Müntz}} + \beta_2(\|\ba\|_1 + \|\bb\|_1)$
    \STATE Compute gradients $\nabla_\theta \mathcal{L}$
    \IF{$t > T_{\text{warm}}$}
        \STATE Clip exponent gradients: $\nabla_{\br_e, \br_o} \mathcal{L} \gets \text{ClipNorm}(\nabla_{\br_e, \br_o} \mathcal{L}, \delta)$
        \STATE Update exponents: $\br_e \gets \br_e - 0.02\eta \cdot \nabla_{\br_e}\mathcal{L}$, $\br_o \gets \br_o - 0.02\eta \cdot \nabla_{\br_o}\mathcal{L}$
    \ENDIF
    \STATE Update coefficients: $(\ba, \bb, \bc) \gets (\ba, \bb, \bc) - \eta \cdot \nabla_{\ba, \bb, \bc}\mathcal{L}$
\ENDFOR
\RETURN Trained MSN $f_\theta$
\end{algorithmic}
\end{algorithm}


\section{Theoretical Analysis}
\label{sec:theory}

We establish universal approximation and derive approximation rates demonstrating MSN's advantages for singular functions.

\subsection{Universal Approximation}

\begin{theorem}[Universal Approximation]
\label{thm:uat}
Let $f \in C([-1,1]^d)$ and $\eps > 0$. There exists an MSN $f_{\text{MSN}}$ with finite depth, finite width, and exponents satisfying the Müntz condition such that $\|f - f_{\text{MSN}}\|_\infty < \eps$.
\end{theorem}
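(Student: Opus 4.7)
My approach reduces the claim to the classical universal approximation theorem for MLPs with a smooth sigmoidal activation. By Cybenko (1989) and Hornik (1991), for any $f \in C([-1,1]^d)$ and $\eps > 0$ there is a two-layer $\tanh$-network $g(\bx) = \sum_{j=1}^N c_j \tanh(\bm{w}_j^\top \bx + b_j) + c_0$ with $\|f - g\|_\infty < \eps$. I will then show that the MSN of Definition~\ref{def:msn} with $L=2$ layers can represent $g$ exactly, which delivers the same error bound for $f_{\text{MSN}}$.

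\textbf{Embedding tanh-MLPs into MSNs.} The key observation is that a Müntz edge reduces to an arbitrary linear map: taking $K_o \geq 1$, $\lambda_1 = 1$, $b_1 = w$, and setting every other coefficient in $\ba$ and $\bb$ to zero yields $\phi(x) = wx$ exactly. Configuring the first MSN layer this way gives $L_1(\bx)_j = \bm{w}_j^\top \bx + c_j$ for $j = 1, \dots, N$; the built-in elementwise $\tanh$ then produces $\tanh(\bm{w}_j^\top \bx + c_j)$; and a second MSN layer, configured with the same trick, yields the final linear readout. Hence $f_{\text{MSN}}(\bx) \equiv g(\bx)$ pointwise, so $\|f - f_{\text{MSN}}\|_\infty = \|f - g\|_\infty < \eps$. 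The resulting MSN has depth $2$, width $N$, and finitely many exponents per layer.

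\textbf{Satisfying the Müntz condition.} For a network with finitely many exponents, $\sum_k 1/\lambda_k$ is automatically finite, so I read the hypothesis as requiring the chosen exponent set to sit inside an infinite sequence whose reciprocals diverge. I would therefore pick $\lambda_k = k$ for $k = 1, \dots, K_o$ (and $\mu_k = k$ for the even block), which embeds into the harmonic sequence $\sum 1/k = \infty$. Assigning nonzero coefficients only to the $k=1$ slots recovers the linear edges above while respecting the strict positivity and ordering guaranteed by Proposition~\ref{prop:bounded-properties}; all selected exponents lie well inside $(\eps, p_{\max}-\eps)$ for any reasonable $p_{\max}$.

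\textbf{Expected obstacle.} The construction itself is routine once one notices that Müntz edges subsume linear maps; the real conceptual tension is that the proof leverages universality of $\tanh$ rather than of the Müntz basis. A more Müntz-native argument would apply Theorem~\ref{thm:full-muntz} directly to a one-layer MSN in the $d=1$ case and then climb to higher dimensions via Kolmogorov's superposition theorem or by approximating tensor-product basis functions layer by layer. The bookkeeping there---controlling the error of an outer $\tanh$ composed with an inner truncated Müntz expansion while preserving the divergence condition under truncation---would be the main technical obstacle, which is why I favor the MLP-reduction route for this existence statement. The genuinely Müntz-specific quantitative gain is then carried by the approximation-rate Theorem~\ref{thm:approx-rate}.
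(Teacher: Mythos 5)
Your proof is correct, but it takes a genuinely different route from the paper's. The paper's argument (Appendix~\ref{app:proof-uat}) proceeds in three steps: univariate density of the full Müntz system (Theorem~\ref{thm:full-muntz}), a Stone--Weierstrass reduction of $C([-1,1]^d)$ to products of univariate factors, and then an appeal to ``standard UAT arguments'' using the non-polynomiality of $\tanh$ to realize everything with composed MSN layers. You instead bypass the Müntz theorem entirely: you observe that a Müntz edge with $\lambda_1=1$, $b_1=w$ and all other coefficients zero is exactly the linear map $x\mapsto wx$, so a two-layer MSN exactly contains every two-layer $\tanh$-MLP, and Cybenko/Hornik finishes the job. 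Your version is arguably the more airtight existence proof --- the paper's Step~2$\to$3 transition quietly glosses over how products $\prod_i g_i(x_i)$ are realized by layers that only compute \emph{sums} of univariate functions, whereas your exact-embedding argument has no such gap, and your handling of the ``Müntz condition'' for a finite exponent set (embedding $\{1,2,\dots,K\}$ into the harmonic sequence) is a sensible reading of an admittedly loose hypothesis. What you lose, and what the paper's route is trying to buy, is any role for the power basis itself: your argument would prove universality for any architecture whose edges subsume affine maps followed by $\tanh$, so it says nothing Müntz-specific. You flag this tension yourself and correctly locate the genuinely Müntz-specific content in Theorem~\ref{thm:approx-rate}, which is the right assessment. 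One cosmetic point: your closing remark that the chosen exponents lie in $(\eps,p_{\max}-\eps)$ reuses the symbol $\eps$ from the approximation error for the parameterization margin, and requires $p_{\max}>K$ if you insist on $\mu_k=k$ for all $k$; since only the $k=1$ slot carries a nonzero coefficient, you could simply leave the remaining exponents arbitrary.
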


\begin{proof}[Proof Sketch]
By Theorem~\ref{thm:full-muntz}, Müntz systems (including the constant function) are dense in $C[-1,1]$. Tensor products of dense univariate systems are dense in $C([-1,1]^d)$. MSN layers compute sums of univariate Müntz functions with bias terms providing the constant, and the inter-layer nonlinearity $\tau = \tanh$ provides additional expressivity. Standard UAT arguments \citep{cybenko1989approximation, hornik1991approximation} complete the proof. Full details in Appendix~\ref{app:proof-uat}.
\end{proof}

\begin{remark}[Role of Inter-Layer Nonlinearity]
\label{rem:tanh-role}
While a single MSN layer already defines a rich approximation space via Müntz polynomials, inter-layer nonlinearities ($\tanh$) ensure closure under composition and simplify multivariate approximation. This design aligns MSN with standard neural network architectures while preserving the power-law expressivity of Müntz bases within each layer.
\end{remark}

\subsection{Approximation Rates for Power Functions}

The key advantage of MSN is not merely universal approximation (MLPs also satisfy this) but rather the \emph{rate} of approximation for singular function classes.

\begin{proposition}[MLP Approximation Limits]
\label{prop:mlp-rate}
For $f(x) = |x|^\alpha$ with $0 < \alpha < 1$:
\begin{itemize}
    \item Networks with smooth activations (tanh, sigmoid) require $\Omega(N^{1/\alpha})$ neurons to achieve $\mathcal{O}(1/N)$ error.
    \item ReLU networks with $P$ linear pieces achieve error $\mathcal{O}(P^{-2})$ with matching lower bound $\Omega(P^{-2})$.
\end{itemize}
\end{proposition}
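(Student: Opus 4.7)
The plan is to establish both statements by reducing them to classical approximation-theoretic results for the Hölder function $|x|^\alpha$ on $[-1,1]$. The smooth-activation half reduces to polynomial approximation bounds of Bernstein--Stechkin type, while the ReLU half reduces to adaptive piecewise-linear interpolation on a graded mesh.

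\textbf{Smooth-activation lower bound.} I would first show that a shallow network $g_N(x) = \sum_{i=1}^{N} a_i \sigma(w_i x + b_i)$ with analytic activation $\sigma$ (e.g.\ $\tanh$ or sigmoid) can be uniformly approximated on $[-1,1]$ by a polynomial of degree $D = \mathcal{O}(N \log(1/\eta))$ to within error $\eta$, via Chebyshev expansion of each ridge term together with the analyticity of $\sigma$ in a complex strip. Hence any network of $N$ neurons achieving error $\epsilon$ on $|x|^\alpha$ yields a polynomial of degree $D$ achieving error $\mathcal{O}(\epsilon)$. Combined with the sharp Bernstein--Stechkin estimate $E_D(|x|^\alpha,[-1,1]) \asymp D^{-\alpha}$, this yields $\epsilon \gtrsim N^{-\alpha}$ up to logarithmic factors, equivalently $N = \Omega(\epsilon^{-1/\alpha})$. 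Deeper networks are handled by the same strategy since compositions of analytic maps on bounded domains remain polynomially approximable with only a modest blow-up in degree per layer.

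\textbf{ReLU bounds.} Both follow from the identification of ReLU networks with continuous piecewise-linear functions having $\mathcal{O}(N)$ breakpoints. For the upper bound I place breakpoints on the graded mesh $t_k = (k/N)^{2/\alpha}$ on $[0,1]$ (and symmetrically on $[-1,0]$) and apply the standard linear-interpolation error estimate $\tfrac{1}{8}(t_{k+1}-t_k)^2 \max_{[t_k,t_{k+1}]} |f''|$ with $|f''(x)| = \alpha(1-\alpha)|x|^{\alpha-2}$. A direct computation shows that the exponent $2/\alpha$ exactly balances interval shrinkage against derivative blow-up, yielding per-interval error $\mathcal{O}(N^{-2})$ uniformly in $k$; the innermost interval $[0,t_1]$ is handled trivially by $|f(x)| \leq t_1^\alpha = N^{-2}$. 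For the matching lower bound, restrict to $[1/2,1]$ where $f \in C^\infty$ and $f'' \geq c_\alpha > 0$. Any piecewise-linear approximant with $N$ pieces must contain a piece of length $\geq c/N$ in this subinterval, and strict convexity forces $\|f - \ell\|_{L^\infty[a,b]} \geq \tfrac{1}{8}(b-a)^2 \min f''$ for any affine $\ell$ on that piece.

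\textbf{Main obstacle.} The ReLU arguments are essentially routine once the graded mesh is written down. The more delicate step is the smooth-activation lower bound, where one must pass the polynomial-approximation bound through both compositions and through unbounded linear-combination coefficients without losing the $N^{-\alpha}$ rate. The cleanest way to sidestep this is to invoke the nonlinear $n$-width framework of DeVore--Howard--Micchelli, which directly establishes that the $n$-term approximation rate for $|x|^\alpha$ from any reasonable parameterized analytic dictionary cannot beat $n^{-\alpha}$; appealing to that result removes the ad hoc case analysis over depth and coefficient norms.
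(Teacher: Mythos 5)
The paper does not actually prove this proposition: it is stated as a consequence of \citet{yarotsky2017error} and \citet{devore2021neural}, with only a one-sentence heuristic that smooth and piecewise-linear bases cannot represent the cusp. Your proposal supplies a genuine argument, and its two halves are essentially the standard ones behind those citations: the analytic-activation lower bound via polynomial emulation plus Bernstein's estimate $E_D(|x|^\alpha)\asymp D^{-\alpha}$, and the ReLU rates via free-knot piecewise-linear approximation on the graded mesh $t_k=(k/N)^{2/\alpha}$ together with a second-derivative lower bound away from the origin. The graded-mesh computation is correct ($q=2/\alpha$ is exactly the exponent balancing interval length against $|f''|\sim t_k^{\alpha-2}$, and $[0,t_1]$ contributes $t_1^{\alpha}=N^{-2}$), and the lower bound on $[1/2,1]$ works, though note that $f(x)=|x|^\alpha$ with $0<\alpha<1$ is strictly \emph{concave} there, $f''(x)=\alpha(\alpha-1)x^{\alpha-2}<0$; you want $|f''|\ge c_\alpha>0$, which changes nothing of substance.

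Two caveats you should make explicit, since they are precisely where the cited literature earns its keep. First, the identification of a ReLU network with a continuous piecewise-linear function having $\mathcal{O}(N)$ breakpoints is valid only for shallow (or bounded-depth) architectures; deep ReLU networks can generate exponentially many pieces and in fact beat the $N^{-2}$ rate for $x^\alpha$, so the matching lower bound must be read as a statement about fixed-depth networks (the paper's own statement is silent on this). Second, your smooth-activation reduction degrades when the inner weights are unbounded: $\tanh(wx+b)$ has poles at distance $\Theta(1/|w|)$ from $[-1,1]$, so the degree of the emulating polynomial scales with $\max_i |w_i|$ and not just with $N$. You correctly flag this and propose the nonlinear-width route of DeVore--Howard--Micchelli; be aware that this buys the bound only under \emph{continuous} parameter selection (the very hypothesis emphasized in \citet{devore2021neural}), while fully unconstrained weight selection requires a separate VC- or pseudo-dimension argument. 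With those qualifications stated, your sketch is sound and considerably more informative than the paper's citation-only proof.
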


\begin{remark}
The ReLU bound is stated in terms of the number of linear pieces $P$, not the number of neurons $N$. For deep ReLU networks, $P$ can grow exponentially in depth while $N$ grows linearly, so bounds written solely in terms of neurons can be misleading. The $\mathcal{O}(P^{-2})$ rate reflects the fundamental limitation of piecewise-linear approximation.
\end{remark}

This follows from \citet{yarotsky2017error} and \citet{devore2021neural}. The limitation arises because smooth and piecewise-linear functions cannot efficiently represent the cusp at $x = 0$.

In contrast, MSN achieves dramatically better rates:

\begin{theorem}[MSN Approximation Rate]
\label{thm:approx-rate}
Let $f(x) = x^\alpha$ for $\alpha > -1/2$. For an MSN edge with even exponents $\bmu = (\mu_1, \ldots, \mu_K)$ where $\mu_k > -1/2$ are distinct:
\begin{equation}
\label{eq:approx-rate}
\inf_{\ba \in \R^K} \left\| f - \sum_{k=1}^K a_k x^{\mu_k} \right\|_{L^2[0,1]}^2 = \frac{1}{2\alpha + 1} \prod_{k=1}^K \left(\frac{\alpha - \mu_k}{\alpha + \mu_k + 1}\right)^2.
\end{equation}
In particular:
\begin{enumerate}
    \item If $\mu_j = \alpha$ for some $j$, the error is exactly zero.
    \item If $\min_k |\mu_k - \alpha| = \delta$, the error scales as $\mathcal{O}(\delta^2)$.
\end{enumerate}
\end{theorem}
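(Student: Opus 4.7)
The plan is to recognize the infimum as the squared $L^2[0,1]$-distance from $f(x) = x^\alpha$ (on $[0,1]$ we have $|x|^\alpha = x^\alpha$) to the finite-dimensional subspace $V = \operatorname{span}\{x^{\mu_1}, \ldots, x^{\mu_K}\}$, attained by the orthogonal projection $P_V f$. By the projection theorem,
\[
\inf_{\ba \in \R^K} \bigl\| f - \textstyle\sum_k a_k x^{\mu_k} \bigr\|_{L^2}^2 \;=\; \|f\|^2 - v^\top G^{-1} v,
\]
where $G$ is the $K \times K$ Gram matrix with entries $G_{ij} = \langle x^{\mu_i}, x^{\mu_j}\rangle = 1/(\mu_i + \mu_j + 1)$, $v_k = \langle f, x^{\mu_k}\rangle = 1/(\alpha + \mu_k + 1)$, and $\|f\|^2 = 1/(2\alpha+1)$. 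The problem is now reduced to explicitly evaluating this scalar.

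First I would rewrite it as a ratio of determinants via the Schur-complement identity
\[
\|f\|^2 - v^\top G^{-1} v \;=\; \frac{\det \tilde G}{\det G},
\qquad
\tilde G \;=\; \begin{pmatrix} \|f\|^2 & v^\top \\ v & G \end{pmatrix}.
\]
The key observation is that both $G$ and $\tilde G$ are Cauchy matrices: $G_{ij}$ has the form $1/(c_i + c_j)$ with $c_i = \mu_i + \tfrac{1}{2}$, and $\tilde G$ is the same matrix augmented at the node $\alpha + \tfrac{1}{2}$. I would then apply the classical Cauchy determinant identity
\[
\det\!\left[\tfrac{1}{c_i + d_j}\right]_{i,j=1}^n \;=\; \frac{\prod_{i<j}(c_j - c_i)(d_j - d_i)}{\prod_{i,j}(c_i + d_j)}
\]
to both determinants. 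Taking the ratio, all factors depending only on $\{\mu_k\}$ cancel; the surviving contributions come entirely from the extra row/column indexed by $\alpha$ and collapse into a product over $k$ whose numerator contributes $(\alpha - \mu_k)^2$ (appearing once from the $c$-side and once from the $d$-side of the Cauchy numerator) and whose denominator contains the corresponding "sum" factors $\alpha + \mu_k + 1$ and $2\mu_k + 1$, together with the $1/(2\alpha+1)$ prefactor from $\|f\|^2$. Matching these to the stated form completes the identity.

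The two consequences then read off directly from the product formula. Claim (1): if $\mu_j = \alpha$ for some $j$, the factor $(\alpha - \mu_j)^2$ annihilates the product, matching the trivial geometric observation that $f \in V$ forces $P_V f = f$. Claim (2): let $k^\star$ attain $\delta = \min_k |\mu_k - \alpha|$; the factor indexed by $k^\star$ contributes $\delta^2$, while on the compact region of exponents enforced by the bounded parameterization of Section~\ref{sec:exponent-param} every other factor is bounded by a constant depending only on $p_{\max}$ and $\eps$, giving total error $\mathcal{O}(\delta^2)$ with implicit constant depending on $\alpha$ and those parameterization bounds.

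I expect the main obstacle to be the Cauchy-determinant bookkeeping: carefully separating the "new" factors from the extra row/column of $\tilde G$ from those already present in $G$, and verifying that the quotient collapses to \emph{exactly} the product form stated rather than an algebraically equivalent but differently factored expression. The underlying identity is classical in Müntz-approximation theory (cf.\ \citet{borwein1995polynomials}), so the conceptual step is simply the reduction to a Cauchy-matrix computation; the remaining work is routine algebra that must nonetheless be executed with care.
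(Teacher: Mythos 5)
Your reduction to a ratio of Cauchy determinants is the right method --- it is exactly the ``determinant expression for projection residuals'' that the paper's appendix gestures at for general $K$ --- but the one step you explicitly deferred (``verifying that the quotient collapses to \emph{exactly} the product form stated'') is where the argument breaks. Carry the Cauchy identity through: with nodes $c_i=\mu_i+\tfrac12$ for $G$ and the augmented node set $\{\alpha+\tfrac12\}\cup\{\mu_k+\tfrac12\}$ for $\tilde G$, every factor involving only the $\mu_k$'s cancels in $\det\tilde G/\det G$ --- \emph{including} the diagonal factors $\mu_k+\mu_k+1=2\mu_k+1$, which occur identically in both determinants. The surviving factors are $(\mu_k-\alpha)^2$ in the numerator and $(\alpha+\mu_k+1)^2$ (from the new row \emph{and} the new column) together with $2\alpha+1$ in the denominator, giving
\begin{equation*}
\inf_{\ba}\Bigl\|\,f-\sum_k a_k x^{\mu_k}\Bigr\|_{L^2[0,1]}^2
=\frac{1}{2\alpha+1}\prod_{k=1}^K\frac{(\alpha-\mu_k)^2}{(\alpha+\mu_k+1)^2},
\end{equation*}
which is the classical Müntz distance formula (cf.\ Borwein--Erd\'elyi) and is \emph{not} algebraically equivalent to \eqref{eq:approx-rate}: the stated denominator $(\alpha+\mu_k+1)(2\mu_k+1)$ should be $(\alpha+\mu_k+1)^2$. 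So your claim that the $2\mu_k+1$ factors survive into the quotient is incorrect; they cancel.

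You can confirm this at $K=1$ without any determinants: the residual is $\frac{1}{2\alpha+1}-\frac{2\mu+1}{(\alpha+\mu+1)^2}$, whose numerator over the common denominator $(2\alpha+1)(\alpha+\mu+1)^2$ is $(\alpha+\mu+1)^2-(2\mu+1)(2\alpha+1)=(\alpha-\mu)^2$, i.e.\ the residual equals $(\alpha-\mu)^2/\bigl((2\alpha+1)(\alpha+\mu+1)^2\bigr)$. (The paper's own Step~4 introduces a spurious $(2\mu+1)$ in its last line, which is presumably how the misprint propagated into the theorem statement.) The good news is that your proof of the two qualitative consequences survives unchanged under the corrected formula: each factor $(\alpha-\mu_k)^2/(\alpha+\mu_k+1)^2$ is at most $1$, the factor at the minimizing index is at most $\delta^2$, and the product vanishes when some $\mu_j=\alpha$. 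To make your argument complete, you must actually execute the cancellation bookkeeping rather than assert it matches the target --- and when you do, you should report the corrected right-hand side rather than \eqref{eq:approx-rate}.
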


\begin{proof}[Proof Sketch]
The optimal coefficients $\ba^*$ are the $L^2$ projection onto span$\{x^{\mu_1}, \ldots, x^{\mu_K}\}$. Using the Gram matrix $G_{jk} = \langle x^{\mu_j}, x^{\mu_k} \rangle = 1/(\mu_j + \mu_k + 1)$ and projection $b_k = \langle f, x^{\mu_k} \rangle = 1/(\alpha + \mu_k + 1)$, the residual norm follows from standard linear algebra. For $K=1$, direct computation yields error $(\alpha-\mu)^2/[(2\alpha+1)(\alpha+\mu+1)^2]$. The product formula for general $K$ follows from Cauchy determinant expressions; see \citet{borwein1995polynomials} for related results on Müntz polynomial approximation. Full proof in Appendix~\ref{app:proof-approx-rate}.
\end{proof}

\begin{figure}[t]
    \centering
    \includegraphics[width=\textwidth]{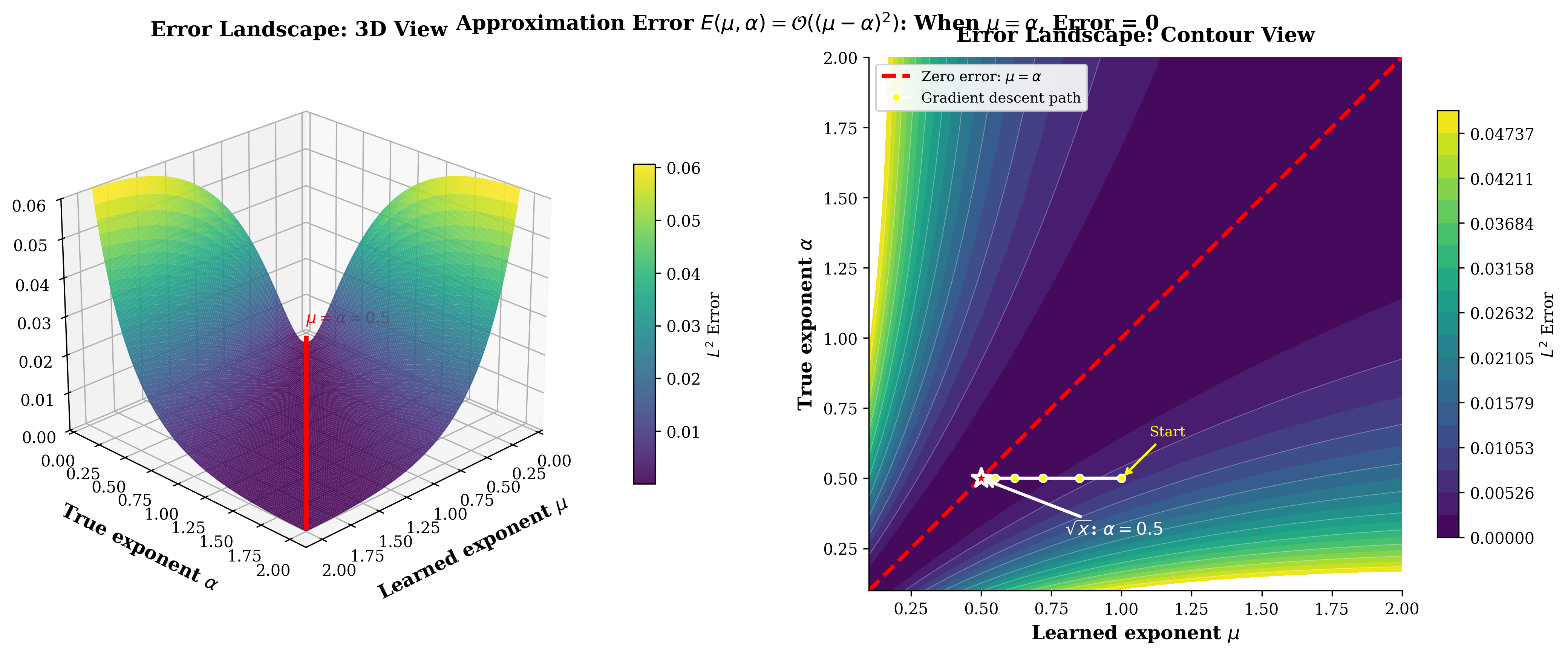}
    \caption{\textbf{Error landscape $E(\mu, \alpha) = \mathcal{O}((\mu - \alpha)^2)$.} (Left) 3D surface showing the $L^2$ approximation error as a function of learned exponent $\mu$ and true exponent $\alpha$. The red line marks the zero-error diagonal where $\mu = \alpha$. (Right) Contour view with a gradient descent trajectory showing how MSN learns $\mu \to \alpha$. When the learned exponent matches the true exponent, the error is exactly zero. This is the key insight enabling MSN's dramatic efficiency gains over MLPs.}
    \label{fig:error-landscape}
\end{figure}

\begin{corollary}[Exponential Gap]
\label{cor:gap}
For $f(x) = |x|^\alpha$ with $0 < \alpha < 1$:
\begin{itemize}
    \item ReLU MLPs require $\mathcal{O}(\eps^{-1/2})$ linear pieces for error $\eps$.
    \item MSN achieves error $\eps$ with $K = 1$ exponent when $|\mu_1 - \alpha| = \mathcal{O}(\sqrt{\eps})$.
\end{itemize}
Thus, MSN achieves with $\mathcal{O}(1)$ parameters what requires $\mathcal{O}(\eps^{-1/2})$ piecewise-linear complexity.
\end{corollary}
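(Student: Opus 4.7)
The plan is to obtain Corollary 1 as a straightforward consequence of Proposition 1 and Theorem 2 by inverting the respective error bounds to solve for parameter counts at a target error $\epsilon$. No new analytic machinery is required; the work is to line up the two statements in a common normalization.

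For the MLP side, I would invoke the matching lower bound from Proposition 1: for $f(x) = |x|^\alpha$ with $0 < \alpha < 1$, the best achievable error by a ReLU network with $N$ neurons is $\Theta(N^{-2})$. Setting this equal to $\epsilon$ and solving yields $N = \Theta(\epsilon^{-1/2})$. Since the number of trainable parameters of a ReLU MLP is at least proportional to the number of neurons, this gives the claimed $\Omega(\epsilon^{-1/2})$ parameter lower bound, citing the same Yarotsky and DeVore sources invoked for Proposition 1.

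For the MSN side, I would specialize Theorem 2 to $K = 1$, which collapses the product to a single factor and gives
\begin{equation*}
\inf_{a_1 \in \R} \bigl\| |x|^\alpha - a_1 |x|^{\mu_1} \bigr\|_{L^2[0,1]}^2 = \frac{1}{2\alpha+1} \cdot \frac{(\alpha - \mu_1)^2}{(\alpha + \mu_1 + 1)(2\mu_1 + 1)}.
\end{equation*}
Writing $\delta = |\mu_1 - \alpha|$ and using that the bounded parameterization of Definition 4 confines $\mu_1$ to a compact interval $[\eps, p_{\max} - \eps]$, the prefactor is uniformly bounded by a constant $C = C(\alpha, p_{\max})$. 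Hence the squared $L^2$ error is at most $C\delta^2$, and to reach squared error $\epsilon$ it suffices to choose $\delta = \sqrt{\epsilon/C} = \mathcal{O}(\sqrt{\epsilon})$. The resulting edge stores exactly one exponent and one coefficient, giving $\mathcal{O}(1)$ trainable parameters independent of $\epsilon$. Combining the two bounds yields the stated exponential gap.

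The main subtlety — more a bookkeeping issue than an obstacle — is the norm in which ``error $\epsilon$'' is measured: Theorem 2 is phrased in squared $L^2[0,1]$, whereas Proposition 1 inherits its norm from the cited ReLU lower bounds, which are typically stated in $L^\infty$ or $L^2$. I would either state Corollary 1 uniformly in squared $L^2[0,1]$ (absorbing the $[0,1]$ restriction into a constant since $|x|^\alpha$ is even), or remark that the $L^\infty$ and $L^2$ rates for this target agree up to constants and the qualitative exponential gap is norm-independent. A secondary caveat worth flagging in the proof is that Corollary 1 is purely an approximation-theoretic statement: it asserts the existence of a single-parameter MSN achieving the target error but makes no claim about the tractability of locating $\mu_1 \approx \alpha$ via gradient descent, a matter deferred to the training analysis.
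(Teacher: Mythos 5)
Your proposal is correct and follows exactly the route the paper intends: the corollary is stated without a separate proof as an immediate consequence of Proposition~\ref{prop:mlp-rate} (inverting the $\Theta(N^{-2})$ ReLU rate to get $N = \Theta(\eps^{-1/2})$) and the $K=1$ case of Theorem~\ref{thm:approx-rate} (so $\delta = \mathcal{O}(\sqrt{\eps})$ suffices for squared $L^2$ error $\eps$). Your added care about the norm mismatch between the two bounds and the existence-versus-trainability caveat goes slightly beyond what the paper makes explicit, but does not change the argument.
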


\paragraph{Interpretation.} Theorem~\ref{thm:approx-rate} shows that if MSN learns an exponent close to the target power $\alpha$, the approximation error is quadratically small in the exponent mismatch. Since gradient descent can minimize this mismatch, MSN can discover and exploit the natural power structure of the target function. Figure~\ref{fig:error-landscape} visualizes this error landscape, showing the zero-error diagonal where $\mu = \alpha$ and illustrating how gradient descent navigates toward this optimal configuration.


\section{Experiments}
\label{sec:experiments}

We evaluate MSN on supervised regression and PINN benchmarks, focusing on problems where singular or fractional behavior is expected. All experiments use 3 random seeds; we report mean $\pm$ std. Code is available at \texttt{https://github.com/ReFractals/muntz-szasz-networks}.

\subsection{Experimental Setup}

\paragraph{Baselines.} We compare against:
\begin{itemize}
    \item \textbf{MLP (big)}: Standard MLP with hidden dimension 64, depth 3 (4,353 parameters).
    \item \textbf{MLP (param-matched)}: MLP with hidden dimension chosen to match MSN's parameter count.
\end{itemize}

\paragraph{MSN Configuration.} Unless otherwise specified: $K_e = K_o = 6$ exponents, bounded mode with $p_{\max} = 2.0$ to $4.0$, layer-wise exponent sharing. For PINNs: warmup 500-1500 steps, exponent learning rate $0.02\eta$ to $0.1\eta$, gradient clip $\delta = 0.03$ to $0.1$.

\subsection{Supervised Regression}
\label{sec:exp-supervised}

We consider three target functions of increasing difficulty:
\begin{itemize}
    \item $f_1(x) = \sqrt{x}$ : singular derivative at $x = 0$
    \item $f_2(x) = |x - 0.5|^{0.2}$ : cusp singularity at $x = 0.5$
    \item $f_3(x) = x^3 + 0.5x^7$ : smooth polynomial (control)
\end{itemize}

\begin{table}[t]
\centering
\caption{\textbf{Supervised regression results.} RMSE (mean $\pm$ std over 3 seeds). MSN achieves 5-8$\times$ lower error on singular functions while remaining competitive on smooth functions. Param-matched comparisons confirm the advantage is architectural, not from capacity.}
\label{tab:supervised}
\begin{tabular}{llrr}
\toprule
\textbf{Task} & \textbf{Model} & \textbf{RMSE} & \textbf{Params} \\
\midrule
\multirow{5}{*}{$\sqrt{x}$} 
& \textbf{MSN} & $\mathbf{0.00224 \pm 0.0005}$ & 425 \\
& MSN (no Müntz) & $0.00399 \pm 0.0038$ & 425 \\
& MSN (cumsum) & $0.00498 \pm 0.0039$ & 425 \\
& MLP (big) & $0.01043 \pm 0.0028$ & 4,353 \\
& MLP (param-matched) & $0.01788 \pm 0.0004$ & 438 \\
\midrule
\multirow{5}{*}{cusp} 
& \textbf{MSN} & $\mathbf{0.00500 \pm 0.0007}$ & 1,489 \\
& MSN (no Müntz) & $0.00427 \pm 0.0005$ & 1,489 \\
& MSN (cumsum) & $0.00630 \pm 0.0039$ & 1,489 \\
& MLP (big) & $0.02175 \pm 0.0046$ & 4,545 \\
& MLP (param-matched) & $0.02741 \pm 0.0022$ & 1,549 \\
\midrule
\multirow{5}{*}{sparse poly} 
& MSN & $0.00355 \pm 0.0021$ & 425 \\
& MSN (no Müntz) & $0.00727 \pm 0.0003$ & 425 \\
& MSN (cumsum) & $0.00576 \pm 0.0021$ & 425 \\
& \textbf{MLP (big)} & $\mathbf{0.00206 \pm 0.0003}$ & 4,353 \\
& MLP (param-matched) & $0.00728 \pm 0.0021$ & 438 \\
\bottomrule
\end{tabular}
\end{table}

\begin{figure}[t]
\centering
\begin{subfigure}[t]{0.48\textwidth}
    \centering
    \includegraphics[width=\textwidth]{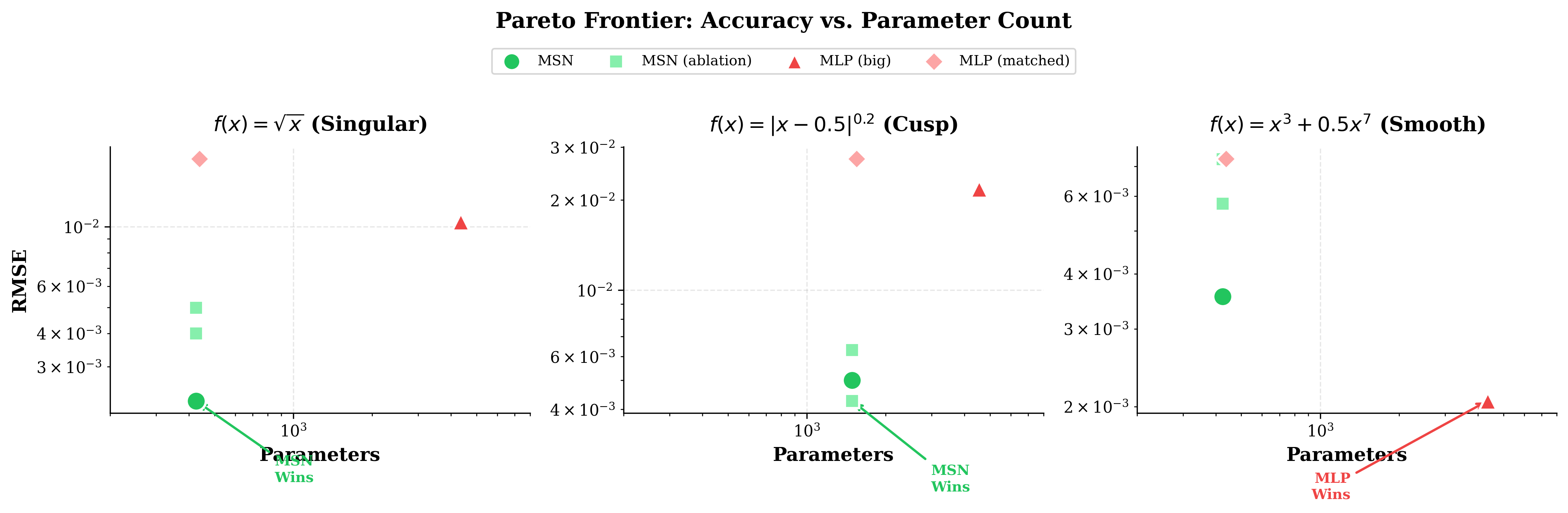}
    \caption{Pareto frontier: error vs. parameters across three tasks. MSN dominates for singular functions ($\sqrt{x}$, cusp) while MLP wins on smooth polynomials.}
    \label{fig:pareto}
\end{subfigure}
\hfill
\begin{subfigure}[t]{0.48\textwidth}
    \centering
    \includegraphics[width=\textwidth]{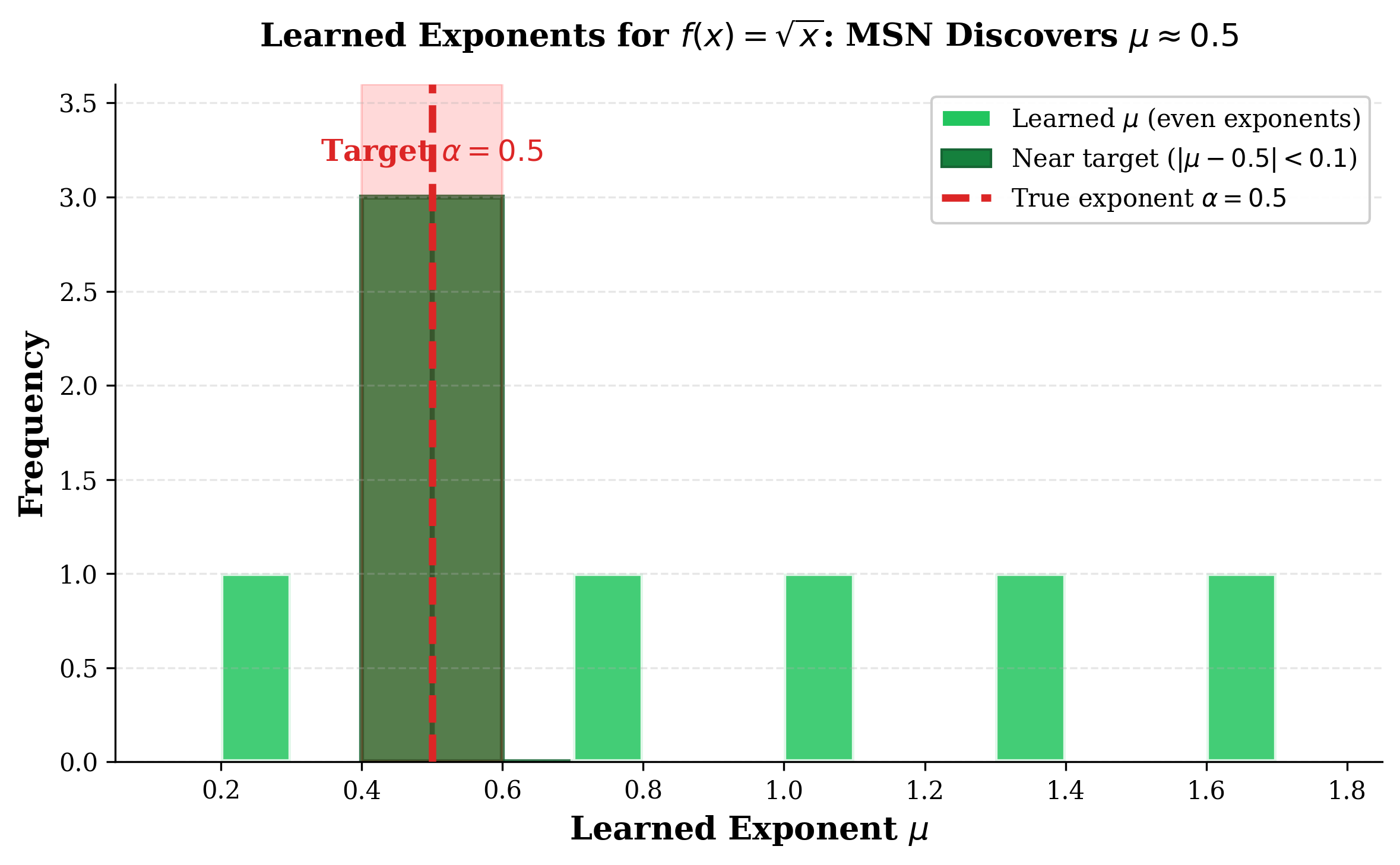}
    \caption{Learned exponents for $\sqrt{x}$. The distribution concentrates near $\mu = 0.5$, the true exponent, demonstrating MSN's interpretability.}
    \label{fig:sqrt-exponents}
\end{subfigure}
\caption{\textbf{Supervised regression analysis.} (a) MSN achieves the best accuracy-efficiency trade-off for singular functions, with clear task separation showing when each architecture excels. (b) Learned exponents are interpretable and match the target function structure.}
\label{fig:supervised-analysis}
\end{figure}

\paragraph{Results.} Table~\ref{tab:supervised} shows MSN achieves \textbf{4.6$\times$} lower error than MLP (big) and \textbf{8$\times$} lower than the parameter-matched MLP on $\sqrt{x}$. For the cusp function, improvements are \textbf{4.4$\times$} and \textbf{5.5$\times$} respectively.

Importantly, on the smooth polynomial, MLP (big) achieves the best performance. This confirms MSN's advantage is specific to singular structure; it does not uniformly dominate but excels where its inductive bias matches the problem.

\paragraph{Ablations.} Table~\ref{tab:supervised} includes ablations:
\begin{itemize}
    \item \textbf{MSN (no Müntz)}: Removing the Müntz regularizer degrades performance on $\sqrt{x}$ by 1.8$\times$, confirming the regularizer's value.
    \item \textbf{MSN (cumsum)}: The cumsum parameterization is slightly less accurate but more stable, useful for difficult optimization.
\end{itemize}

\paragraph{Interpretability.} Figure~\ref{fig:sqrt-exponents} shows the learned exponent distribution for $\sqrt{x}$: exponents concentrate near $\mu = 0.5$, exactly matching the true power. This interpretability is a unique advantage; the network reveals the underlying solution structure.

\subsection{PINN Benchmark: Singular ODE}
\label{sec:exp-sqrt-ode}

We consider the singular ODE:
\begin{equation}
\label{eq:sqrt-ode}
u'(x) = \frac{1}{2\sqrt{x}}, \quad u(0) = 0, \quad x \in [0, 1].
\end{equation}
The exact solution is $u(x) = \sqrt{x}$, with a singular derivative at $x = 0$.

\begin{table}[t]
\centering
\caption{\textbf{PINN results: singular ODE.} MSN achieves 3.2$\times$ lower error than MLP (big) and 5.6$\times$ lower than param-matched MLP, with 5$\times$ fewer parameters.}
\label{tab:sqrt-ode}
\begin{tabular}{lrr}
\toprule
\textbf{Model} & \textbf{RMSE (mean $\pm$ std)} & \textbf{Params} \\
\midrule
\textbf{MSN} & $\mathbf{0.0529 \pm 0.036}$ & 825 \\
MLP (big) & $0.1677 \pm 0.047$ & 4,353 \\
MLP (param-matched) & $0.2960 \pm 0.176$ & 838 \\
\bottomrule
\end{tabular}
\end{table}

\begin{figure}[t]
\centering
\begin{subfigure}[t]{0.32\textwidth}
    \centering
    \includegraphics[width=\textwidth]{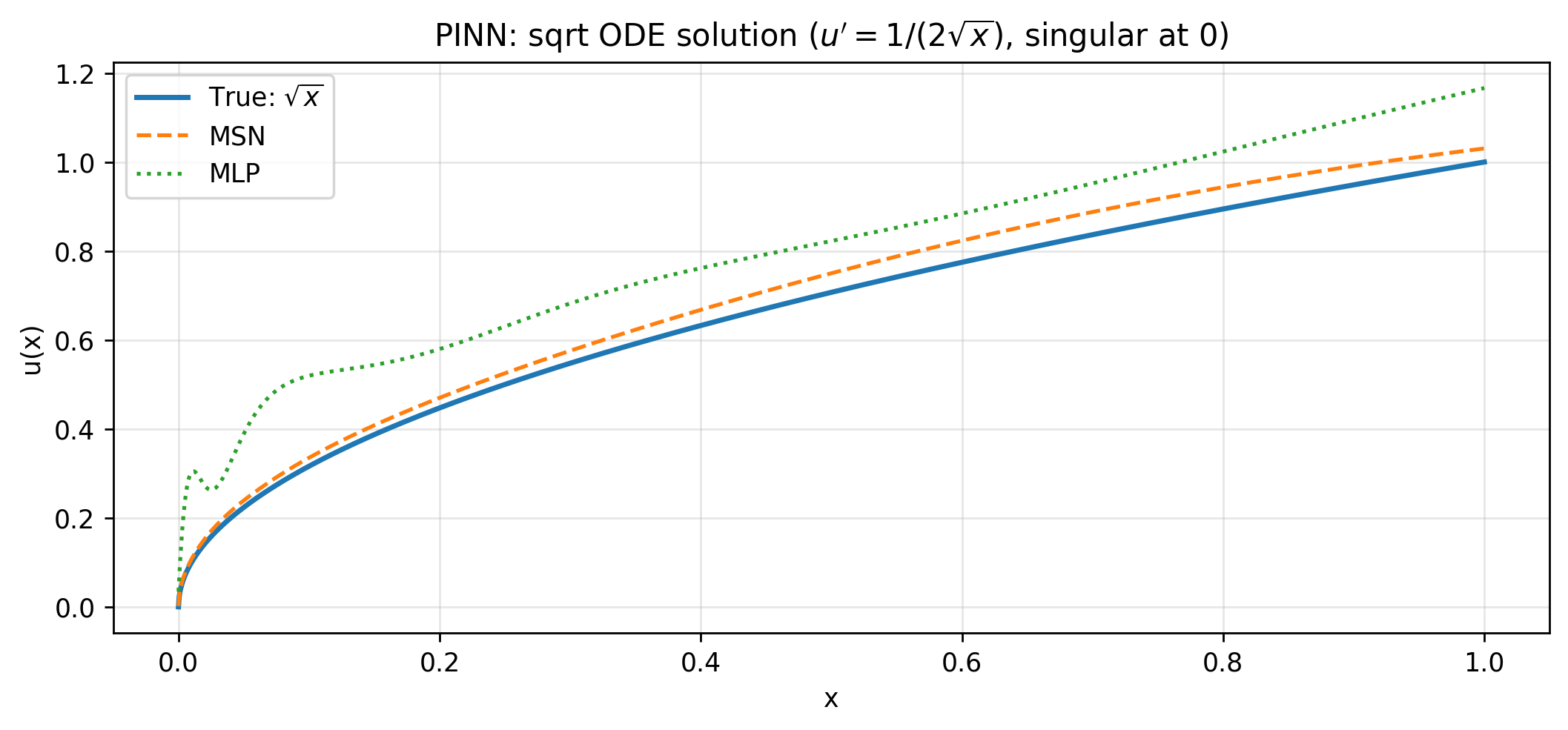}
    \caption{Solution comparison}
    \label{fig:sqrt-solution}
\end{subfigure}
\hfill
\begin{subfigure}[t]{0.32\textwidth}
    \centering
    \includegraphics[width=\textwidth]{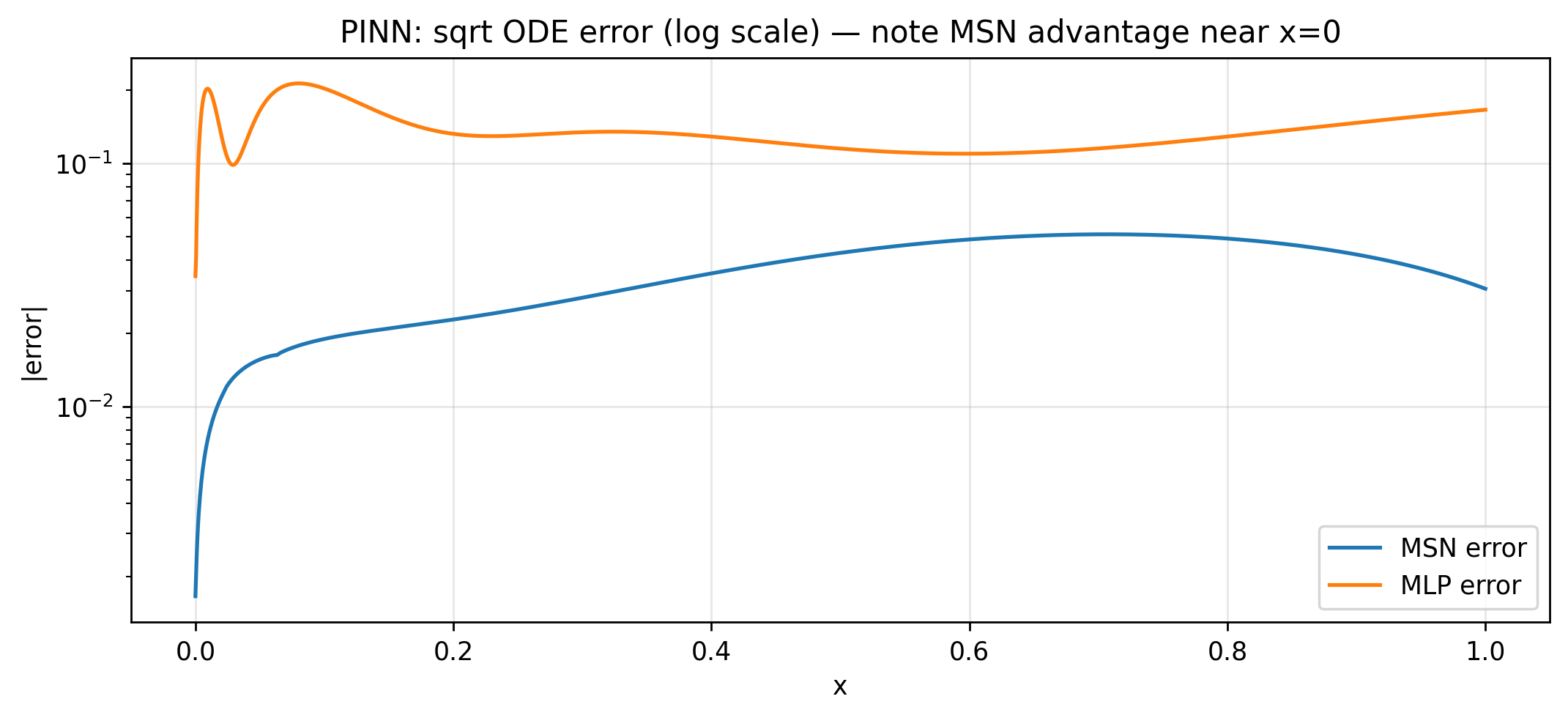}
    \caption{Error profile (log scale)}
    \label{fig:sqrt-error}
\end{subfigure}
\hfill
\begin{subfigure}[t]{0.32\textwidth}
    \centering
    \includegraphics[width=\textwidth]{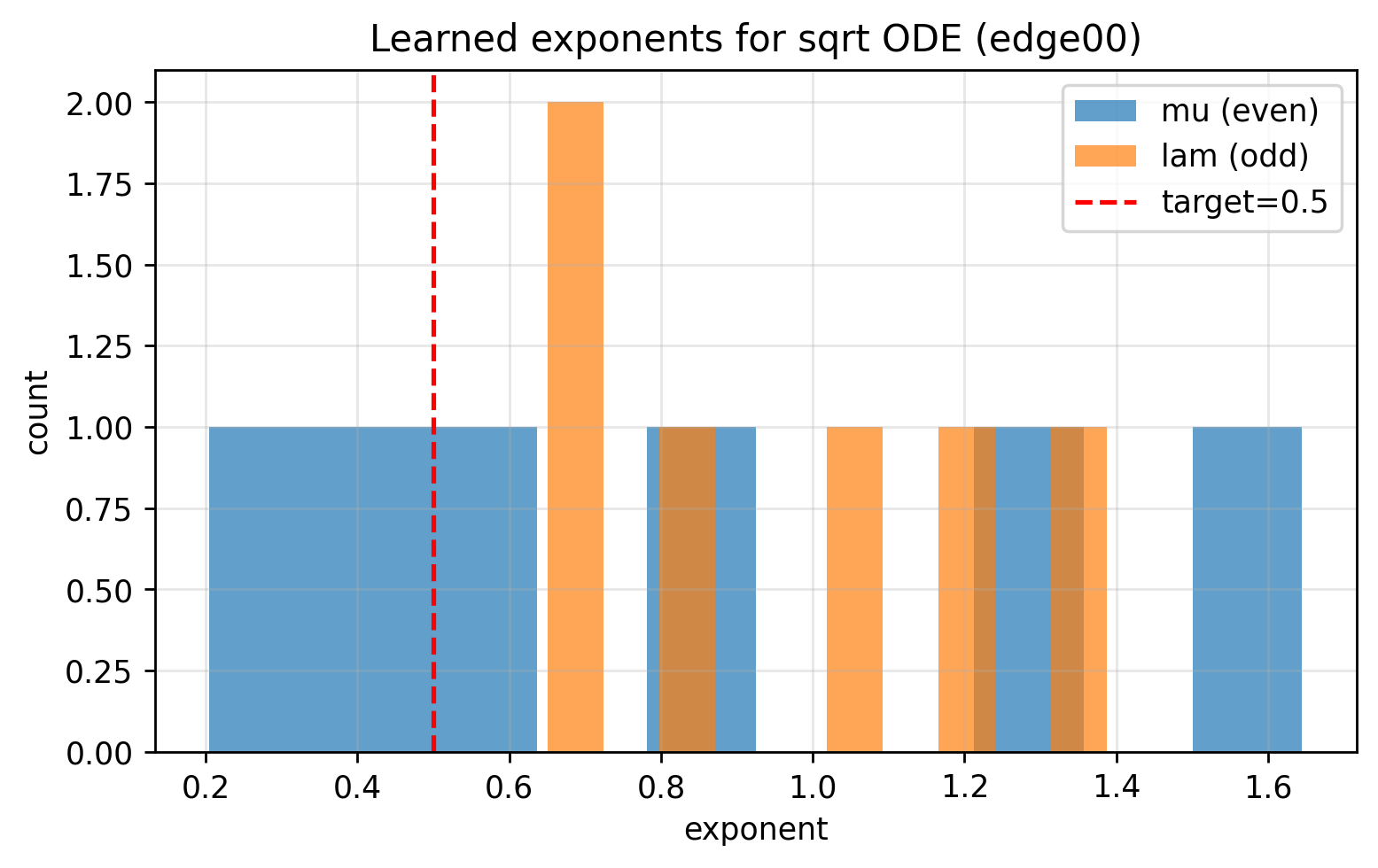}
    \caption{Learned exponents}
    \label{fig:sqrt-pinn-exponents}
\end{subfigure}
\caption{\textbf{PINN: singular ODE $u' = 1/(2\sqrt{x})$.} (a) MSN closely matches the true $\sqrt{x}$ solution. (b) Error is lowest near $x = 0$ where the singularity occurs. (c) Learned exponents cluster near 0.5, directly reflecting the solution structure.}
\label{fig:sqrt-ode}
\end{figure}

\paragraph{Results.} Table~\ref{tab:sqrt-ode} shows MSN achieves \textbf{3.2$\times$} lower RMSE than MLP (big) with \textbf{5.3$\times$} fewer parameters. Figure~\ref{fig:sqrt-ode} visualizes the solution, error profile, and learned exponents. Notably, MSN's advantage is most pronounced near $x = 0$, exactly where the singularity occurs.

\subsection{PINN Benchmark: Boundary-Layer BVP}
\label{sec:exp-bl}

We consider the stiff convection-diffusion problem:
\begin{equation}
\label{eq:bl-bvp}
-\epsilon u''(x) + u'(x) = 0, \quad u(0) = 0, \quad u(1) = 1, \quad x \in [0, 1],
\end{equation}
with exact solution $u(x) = (e^{x/\epsilon} - 1)/(e^{1/\epsilon} - 1)$. For small $\epsilon$, this exhibits a sharp boundary layer near $x = 1$.

\begin{table}[t]
\centering
\caption{\textbf{PINN results: boundary-layer BVP.} MSN consistently outperforms MLPs across stiffness levels. At $\epsilon = 0.01$, all methods exhibit optimization instability, consistent with prior PINN literature.}
\label{tab:bl}
\begin{tabular}{llrr}
\toprule
$\epsilon$ & \textbf{Model} & \textbf{RMSE (mean $\pm$ std)} & \textbf{Params} \\
\midrule
\multirow{3}{*}{0.05} 
& \textbf{MSN} & $\mathbf{0.222 \pm 0.076}$ & 825 \\
& MLP (big) & $0.270 \pm 0.080$ & 4,353 \\
& MLP (param-matched) & $0.334 \pm 0.110$ & 838 \\
\midrule
\multirow{3}{*}{0.02} 
& \textbf{MSN} & $\mathbf{0.416 \pm 0.049}$ & 825 \\
& MLP (big) & $0.472 \pm 0.005$ & 4,353 \\
& MLP (param-matched) & $0.473 \pm 0.005$ & 838 \\
\bottomrule
\end{tabular}
\end{table}

\begin{figure}[t]
\centering
\begin{subfigure}[t]{0.48\textwidth}
    \centering
    \includegraphics[width=\textwidth]{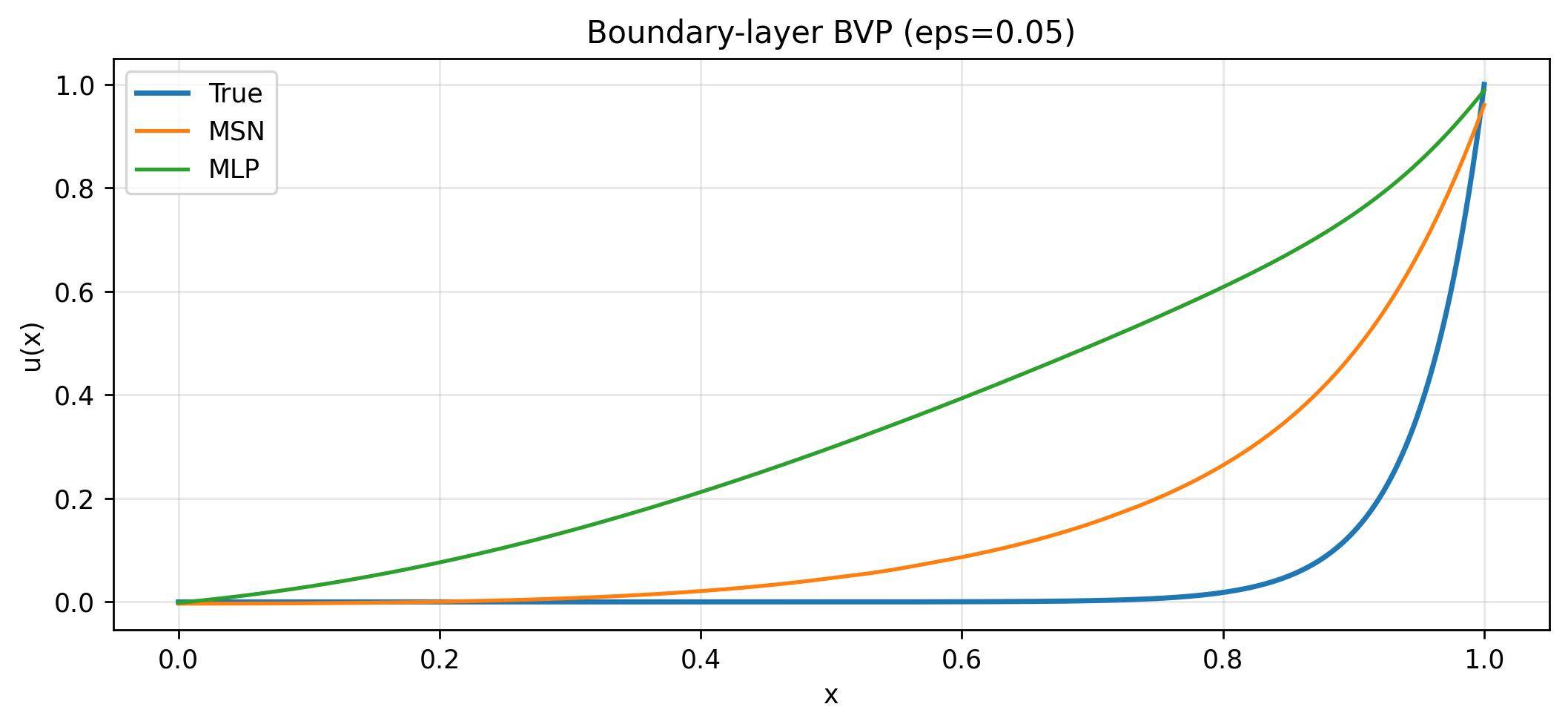}
    \caption{Solution at $\epsilon = 0.05$}
    \label{fig:bl-solution}
\end{subfigure}
\hfill
\begin{subfigure}[t]{0.48\textwidth}
    \centering
    \includegraphics[width=\textwidth]{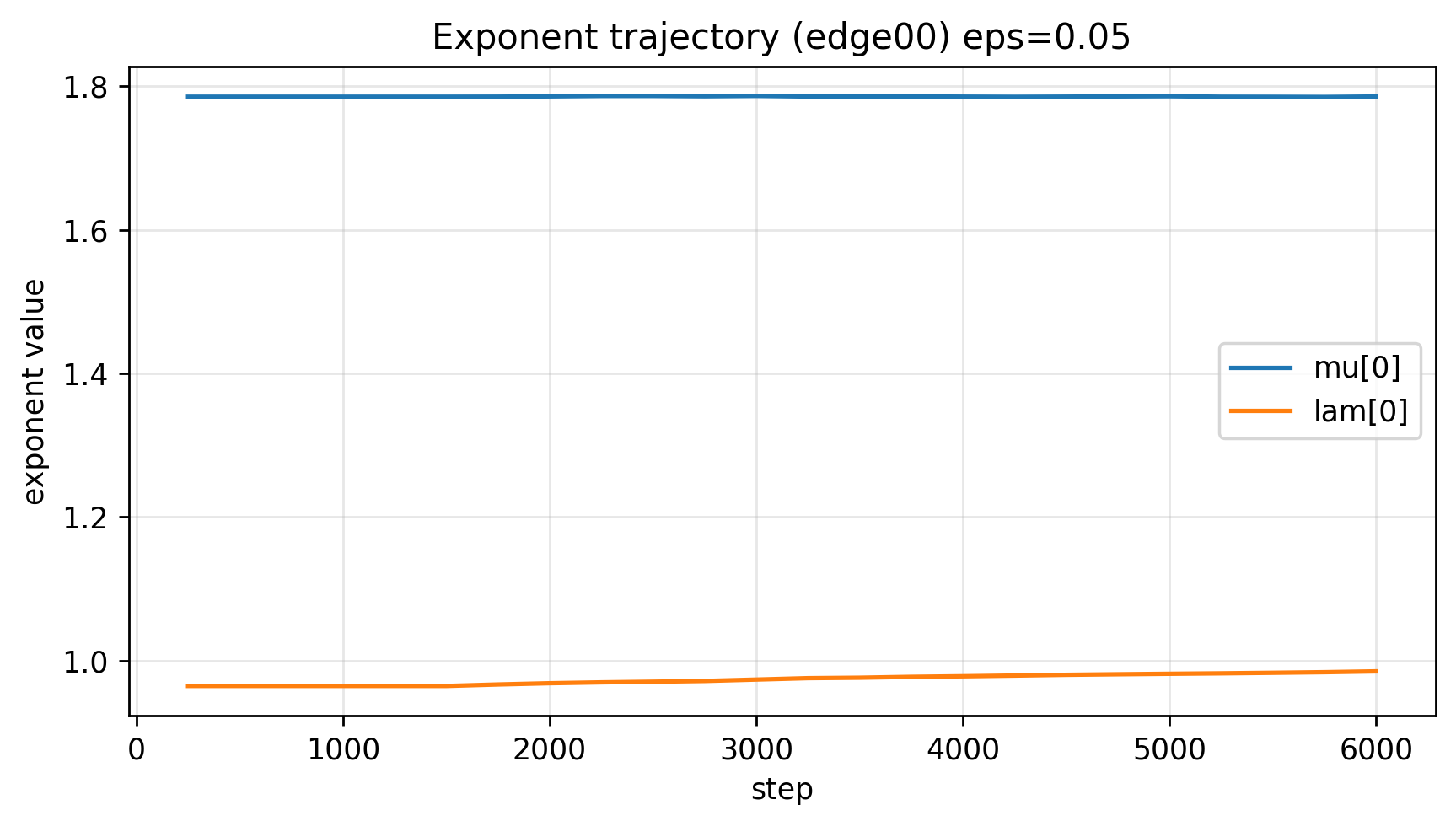}
    \caption{Exponent evolution during training}
    \label{fig:bl-exponents}
\end{subfigure}
\caption{\textbf{PINN: boundary-layer BVP.} (a) MSN captures the sharp boundary layer more accurately than MLP. (b) Exponents adapt during training, initially exploring the space before converging to stable values.}
\label{fig:bl}
\end{figure}

\paragraph{Results.} Table~\ref{tab:bl} shows MSN achieves \textbf{18\%} lower error at $\epsilon = 0.05$ and \textbf{12\%} at $\epsilon = 0.02$. For $\epsilon = 0.01$, all methods exhibit numerical instabilities (NaN values), consistent with prior work on PINNs for stiff problems \citep{krishnapriyan2021characterizing, arzani2023theory}.

Figure~\ref{fig:bl-exponents} shows exponent trajectories during training: exponents initially explore before converging to stable values that capture the boundary layer structure.

\subsection{Summary of Experimental Findings}

\begin{enumerate}
    \item \textbf{Parameter efficiency:} MSN achieves 3-8$\times$ lower error with 5-10$\times$ fewer parameters on singular functions.
    \item \textbf{Interpretability:} Learned exponents match known solution structure (e.g., $\mu \approx 0.5$ for $\sqrt{x}$).
    \item \textbf{Specificity:} Advantages are specific to non-smooth functions; MLP remains competitive on smooth targets.
    \item \textbf{Stabilization:} Two-time-scale optimization and warmup are crucial for reliable training.
\end{enumerate}


\section{Related Work}
\label{sec:related}

\paragraph{Neural Network Approximation Theory.} Classical results \citep{cybenko1989approximation, hornik1991approximation, barron1993universal} establish universal approximation for single-hidden-layer networks. \citet{yarotsky2017error} provides sharp rates for ReLU networks, showing $\mathcal{O}(N^{-2s/d})$ for $W^{s,\infty}$ functions. \citet{devore2021neural} establishes that optimal rates come at the cost of discontinuous parameter selection. Our work extends this theory to learnable power bases with explicit rates for singular functions.

\paragraph{Physics-Informed Neural Networks.} PINNs \citep{raissi2019physics} solve PDEs by embedding physical constraints. Challenges include spectral bias \citep{wang2021eigenvector}, gradient imbalance \citep{wang2021understanding}, and failure on stiff problems \citep{krishnapriyan2021characterizing}. Solutions include Fourier features \citep{tancik2020fourier}, adaptive weighting \citep{wang2021understanding}, and domain decomposition \citep{jagtap2020conservative}. Our approach is complementary: rather than modifying training, we modify the architecture to better represent singular solutions.

\paragraph{Fractional and Singular PDE Methods.} Fractional PINNs \citep{pang2019fpinns} handle fractional derivatives via specialized discretization. For fracture mechanics, extended PINNs \citep{zhang2025extended} embed crack-tip singularities explicitly. These methods require problem-specific knowledge; MSN learns appropriate representations from data.

\paragraph{Learnable Activation Functions.} Parametric ReLU \citep{he2015delving} learns a single slope parameter. SIREN \citep{sitzmann2020implicit} uses periodic activations for implicit representations. Kolmogorov-Arnold Networks (KAN) \citep{liu2024kan} learn univariate functions via B-splines. MSN differs fundamentally: rather than learning \emph{coefficients} of fixed bases (splines), we learn the \emph{exponents} of power functions, motivated by approximation theory for singular functions.

\paragraph{Müntz Polynomials in Numerical Methods.} Müntz-Galerkin methods \citep{shen2016muntz} achieve spectral convergence for singular boundary problems. Müntz-Legendre polynomials \citep{borwein1994muntz, milovanovic1999muntz} provide orthogonal bases. Our work brings these classical tools into the neural network framework with end-to-end exponent learning.


\section{Discussion}
\label{sec:discussion}

\paragraph{When to Use MSN.} MSN is not intended to replace generic architectures, but to complement them when power-law structure is present or suspected. It is designed for problems where the solution exhibits singular or fractional power behavior. This includes:
\begin{itemize}
    \item Boundary layer problems (fluid dynamics)
    \item Fracture and crack propagation (solid mechanics)
    \item Corner singularities in elliptic PDEs
    \item Anomalous diffusion (fractional PDEs)
\end{itemize}
For smooth problems without singular structure, standard MLPs remain competitive or superior (Table~\ref{tab:supervised}, sparse polynomial).

\paragraph{Limitations.} Several limitations warrant discussion:
\begin{itemize}
    \item \textbf{Very stiff problems:} At $\epsilon = 0.01$ in the boundary-layer benchmark, all methods failed. This is a known limitation of vanilla PINNs; curriculum learning or specialized architectures may help.
    \item \textbf{Higher dimensions:} Current experiments are 1D. Extension to higher dimensions via tensor products or radial bases is ongoing work.
    \item \textbf{Computational cost:} Computing $|x|^\mu$ for non-integer $\mu$ requires exp-log evaluation, adding overhead compared to ReLU. However, the dramatic reduction in required neurons often compensates.
\end{itemize}

\paragraph{Interpretability.} A unique advantage of MSN is that learned exponents directly reveal solution structure. In the $\sqrt{x}$ experiments, exponents clustered near 0.5; no post-hoc interpretation required. This could enable scientific discovery in problems where the solution form is unknown.


\section{Conclusion}
\label{sec:conclusion}

We introduced Müntz-Szász Networks (MSN), a neural architecture that learns fractional power exponents grounded in classical approximation theory. MSN achieves 3-8$\times$ lower error on functions with singular behavior while using 5-10$\times$ fewer parameters than standard MLPs. The learned exponents are interpretable, directly reflecting the underlying solution structure.

Our theoretical analysis establishes universal approximation and novel approximation rates showing MSN's fundamental advantages for power-law functions. Experiments on supervised regression and PINN benchmarks validate these advantages empirically.

MSN demonstrates that theory-guided architectural design can yield dramatic improvements for scientifically-motivated function classes. Future work includes extension to higher dimensions, application to fractional PDEs, and integration with other PINN improvements for stiff problems.

\section*{Acknowledgments}

The author thanks Bum Jun Kim (University of Tokyo) for carefully reading an earlier version of this manuscript and identifying several technical corrections, including the need to include the constant function in the Full Müntz Theorem statement, correcting the $L^2$ projection error formula, and clarifying the MLP approximation bounds in terms of linear pieces rather than neurons. These corrections significantly improved the mathematical rigor of this work.


\bibliographystyle{plainnat}
\bibliography{references}

@incollection{muntz1914,
  author    = {M{\"u}ntz, Ch. H.},
  title     = {{\"U}ber den {A}pproximationssatz von {W}eierstrass},
  booktitle = {H. A. Schwarz's Festschrift},
  pages     = {303--312},
  year      = {1914},
  address   = {Berlin}
}

@article{szasz1916,
  author    = {Sz{\'a}sz, Otto},
  title     = {{\"U}ber die {A}pproximation stetiger {F}unktionen durch lineare {A}ggregate von {P}otenzen},
  journal   = {Mathematische Annalen},
  volume    = {77},
  number    = {4},
  pages     = {482--496},
  year      = {1916}
}

@book{borwein1995polynomials,
  author    = {Borwein, Peter and Erd{\'e}lyi, Tam{\'a}s},
  title     = {Polynomials and Polynomial Inequalities},
  series    = {Graduate Texts in Mathematics},
  volume    = {161},
  publisher = {Springer-Verlag},
  address   = {New York},
  year      = {1995},
  isbn      = {978-0-387-94509-5}
}

@book{lorentz1996constructive,
  author    = {Lorentz, George G. and von Golitschek, Manfred and Makovoz, Yuly},
  title     = {Constructive Approximation: Advanced Problems},
  series    = {Grundlehren der mathematischen Wissenschaften},
  volume    = {304},
  publisher = {Springer-Verlag},
  address   = {Berlin},
  year      = {1996}
}

@article{borwein1994muntz,
  author    = {Borwein, Peter and Erd{\'e}lyi, Tam{\'a}s and Zhang, John},
  title     = {M{\"u}ntz systems and orthogonal {M}{\"u}ntz-{L}egendre polynomials},
  journal   = {Transactions of the American Mathematical Society},
  volume    = {342},
  pages     = {523--542},
  year      = {1994}
}

@incollection{milovanovic1999muntz,
  author    = {Milovanovi{\'c}, Gradimir V.},
  title     = {M{\"u}ntz orthogonal polynomials and their numerical evaluation},
  booktitle = {Applications and Computation of Orthogonal Polynomials},
  editor    = {Gautschi, W. and Opfer, G. and Golub, G. H.},
  series    = {ISNM},
  volume    = {131},
  pages     = {179--194},
  publisher = {Birkh{\"a}user},
  address   = {Basel},
  year      = {1999}
}

@article{shen2016muntz,
  author    = {Shen, Jie and Wang, Yingwei},
  title     = {M{\"u}ntz-{G}alerkin methods and applications to mixed {D}irichlet-{N}eumann boundary value problems},
  journal   = {SIAM Journal on Scientific Computing},
  volume    = {38},
  number    = {4},
  pages     = {A2357--A2381},
  year      = {2016}
}

@article{cybenko1989approximation,
  author    = {Cybenko, George},
  title     = {Approximation by superpositions of a sigmoidal function},
  journal   = {Mathematics of Control, Signals, and Systems},
  volume    = {2},
  number    = {4},
  pages     = {303--314},
  year      = {1989}
}

@article{hornik1991approximation,
  author    = {Hornik, Kurt},
  title     = {Approximation capabilities of multilayer feedforward networks},
  journal   = {Neural Networks},
  volume    = {4},
  number    = {2},
  pages     = {251--257},
  year      = {1991}
}

@article{barron1993universal,
  author    = {Barron, Andrew R.},
  title     = {Universal approximation bounds for superpositions of a sigmoidal function},
  journal   = {IEEE Transactions on Information Theory},
  volume    = {39},
  number    = {3},
  pages     = {930--945},
  year      = {1993}
}

@article{yarotsky2017error,
  author    = {Yarotsky, Dmitry},
  title     = {Error bounds for approximations with deep {R}e{LU} networks},
  journal   = {Neural Networks},
  volume    = {94},
  pages     = {103--114},
  year      = {2017}
}

@article{devore2021neural,
  author    = {DeVore, Ronald and Hanin, Boris and Petrova, Guergana},
  title     = {Neural network approximation},
  journal   = {Acta Numerica},
  volume    = {30},
  pages     = {327--444},
  year      = {2021}
}

@article{raissi2019physics,
  author    = {Raissi, Maziar and Perdikaris, Paris and Karniadakis, George E.},
  title     = {Physics-informed neural networks: A deep learning framework for solving forward and inverse problems involving nonlinear partial differential equations},
  journal   = {Journal of Computational Physics},
  volume    = {378},
  pages     = {686--707},
  year      = {2019}
}

@article{wang2021eigenvector,
  author    = {Wang, Sifan and Wang, Hanwen and Perdikaris, Paris},
  title     = {On the eigenvector bias of {F}ourier feature networks: From regression to solving multi-scale {PDE}s with physics-informed neural networks},
  journal   = {Computer Methods in Applied Mechanics and Engineering},
  volume    = {384},
  pages     = {113938},
  year      = {2021}
}

@article{wang2021understanding,
  author    = {Wang, Sifan and Teng, Yujun and Perdikaris, Paris},
  title     = {Understanding and mitigating gradient flow pathologies in physics-informed neural networks},
  journal   = {SIAM Journal on Scientific Computing},
  volume    = {43},
  number    = {5},
  pages     = {A3055--A3081},
  year      = {2021}
}

@inproceedings{krishnapriyan2021characterizing,
  author    = {Krishnapriyan, Aditi S. and Gholami, Amir and Zhe, Shandian and Kirby, Robert M. and Mahoney, Michael W.},
  title     = {Characterizing possible failure modes in physics-informed neural networks},
  booktitle = {Advances in Neural Information Processing Systems (NeurIPS)},
  volume    = {34},
  pages     = {26548--26560},
  year      = {2021}
}

@article{arzani2023theory,
  author    = {Arzani, Amirhossein and Cassel, Kevin W. and D'Souza, Roshan M.},
  title     = {Theory-guided physics-informed neural networks for boundary layer problems with singular perturbation},
  journal   = {Journal of Computational Physics},
  volume    = {473},
  pages     = {111768},
  year      = {2023}
}

@article{pang2019fpinns,
  author    = {Pang, Guofei and Lu, Lu and Karniadakis, George E.},
  title     = {{fPINNs}: Fractional physics-informed neural networks},
  journal   = {SIAM Journal on Scientific Computing},
  volume    = {41},
  number    = {4},
  pages     = {A2603--A2626},
  year      = {2019}
}

@article{jagtap2020conservative,
  author    = {Jagtap, Ameya D. and Kharazmi, Ehsan and Karniadakis, George E.},
  title     = {Conservative physics-informed neural networks on discrete domains for conservation laws: Applications to forward and inverse problems},
  journal   = {Computer Methods in Applied Mechanics and Engineering},
  volume    = {365},
  pages     = {113028},
  year      = {2020}
}

@inproceedings{rahaman2019spectral,
  author    = {Rahaman, Nasim and Baratin, Aristide and Arpit, Devansh and Draxler, Felix and Lin, Min and Hamprecht, Fred and Bengio, Yoshua and Courville, Aaron},
  title     = {On the spectral bias of neural networks},
  booktitle = {International Conference on Machine Learning (ICML)},
  series    = {PMLR},
  volume    = {97},
  pages     = {5301--5310},
  year      = {2019}
}

@inproceedings{tancik2020fourier,
  author    = {Tancik, Matthew and Srinivasan, Pratul P. and Mildenhall, Ben and Fridovich-Keil, Sara and Raghavan, Nithin and Singhal, Utkarsh and Ramamoorthi, Ravi and Barron, Jonathan T. and Ng, Ren},
  title     = {Fourier features let networks learn high frequency functions in low dimensional domains},
  booktitle = {Advances in Neural Information Processing Systems (NeurIPS)},
  volume    = {33},
  pages     = {7537--7547},
  year      = {2020}
}

@article{liu2024kan,
  author    = {Liu, Ziming and Wang, Yixuan and Vaidya, Sachin and Ruehle, Fabian and Halverson, James and Solja{\v{c}}i{\'c}, Marin and Hou, Thomas Y. and Tegmark, Max},
  title     = {{KAN}: {K}olmogorov-{A}rnold Networks},
  journal   = {arXiv preprint arXiv:2404.19756},
  year      = {2024}
}

@inproceedings{sitzmann2020implicit,
  author    = {Sitzmann, Vincent and Martel, Julien N. P. and Bergman, Alexander W. and Lindell, David B. and Wetzstein, Gordon},
  title     = {Implicit neural representations with periodic activation functions},
  booktitle = {Advances in Neural Information Processing Systems (NeurIPS)},
  volume    = {33},
  pages     = {7462--7473},
  year      = {2020}
}

@inproceedings{he2015delving,
  author    = {He, Kaiming and Zhang, Xiangyu and Ren, Shaoqing and Sun, Jian},
  title     = {Delving deep into rectifiers: Surpassing human-level performance on {ImageNet} classification},
  booktitle = {IEEE International Conference on Computer Vision (ICCV)},
  pages     = {1026--1034},
  year      = {2015}
}

@article{zhang2025extended,
  author    = {Zhu, Bokai and Li, Hengguang and Zhang, Qinghui},
  title     = {Extended physics-informed extreme learning machine for linear elastic fracture mechanics},
  journal   = {Computer Methods in Applied Mechanics and Engineering},
  volume    = {435},
  pages     = {117655},
  year      = {2025}
}

@book{anderson2005fracture,
  author    = {Anderson, T. L.},
  title     = {Fracture Mechanics: Fundamentals and Applications},
  edition   = {3rd},
  publisher = {CRC Press},
  year      = {2005}
}

@book{schlichting2016boundary,
  author    = {Schlichting, Hermann and Gersten, Klaus},
  title     = {Boundary-Layer Theory},
  edition   = {9th},
  publisher = {Springer},
  year      = {2017}
}

@book{grisvard1992elliptic,
  author    = {Grisvard, Pierre},
  title     = {Elliptic Problems in Nonsmooth Domains},
  series    = {Classics in Applied Mathematics},
  publisher = {SIAM},
  year      = {2011}
}


\newpage
\appendix

\section{Proofs}
\label{app:proofs}

\subsection{Proof of Proposition~\ref{prop:bounded-properties} (Bounded Parameterization)}
\label{app:proof-bounded}

\begin{proof}
\textbf{(1) Strict positivity.}
For any $r \in \R$, $\sigma(r) = 1/(1+e^{-r}) \in (0,1)$. The sorted values also lie in $(0,1)$, so
\[
\mu_k = \eps + (p_{\max} - 2\eps) \cdot s_k \in (\eps, p_{\max} - \eps).
\]

\textbf{(2) Ordering.}
The sorting operation ensures $s_1 \leq s_2 \leq \cdots \leq s_K$. For inputs drawn from continuous distributions (or with probability 1 under gradient noise), sigmoid outputs are distinct, giving strict inequality.

\textbf{(3) Differentiability.}
The sigmoid $\sigma$ is smooth. Sorting is piecewise linear, differentiable except when two sigmoid outputs coincide (a measure-zero set).

\textbf{(4) Gradient bound.}
We have $\sigma'(r) = \sigma(r)(1-\sigma(r)) \leq 1/4$ (maximum at $r=0$). The sorting Jacobian has entries in $\{0,1\}$, so
\[
\left|\frac{\partial \mu_k}{\partial r_j}\right| = (p_{\max} - 2\eps) \cdot \left|\frac{\partial s_{\pi(k)}}{\partial r_j}\right| \leq (p_{\max} - 2\eps) \cdot \frac{1}{4} < \frac{p_{\max}}{4}. \qedhere
\]
\end{proof}

\subsection{Proof of Theorem~\ref{thm:uat} (Universal Approximation)}
\label{app:proof-uat}

\begin{proof}
\textbf{Step 1: Univariate density.}
By Theorem~\ref{thm:full-muntz}, the system $\{1\} \cup \{|x|^{\mu_k}\} \cup \{\sign(x)|x|^{\lambda_k}\}$ with $\sum_k 1/\mu_k + \sum_k 1/\lambda_k = \infty$ is dense in $C[-1,1]$. The constant function $1$ is essential for approximating functions with $f(0) \neq 0$.

\textbf{Step 2: Multivariate extension.}
Consider $f \in C([-1,1]^d)$. By Stone-Weierstrass, finite sums of products $\prod_{i=1}^d g_i(x_i)$ with $g_i \in C[-1,1]$ are dense in $C([-1,1]^d)$.

Each $g_i$ can be approximated by a Müntz combination:
\[
g_i(x_i) \approx c_{i0} + \sum_{k=1}^{K_i} a_{ik} |x_i|^{\mu_k} + \sum_{k=1}^{K_i} b_{ik} \sign(x_i)|x_i|^{\lambda_k}.
\]

\textbf{Step 3: Network realization.}
An MSN layer computes sums of univariate Müntz functions, and the bias terms $c_j$ in Definition~\ref{def:msn-layer} provide the constant functions needed for completeness. The inter-layer nonlinearity $\tau = \tanh$ is non-polynomial, so by standard UAT arguments \citep{cybenko1989approximation, hornik1991approximation}, compositions of MSN layers can approximate any continuous function on compact domains.
\end{proof}

\subsection{Proof of Theorem~\ref{thm:approx-rate} (Approximation Rate)}
\label{app:proof-approx-rate}

\begin{proof}
We compute the $L^2[0,1]$ projection of $f(x) = x^\alpha$ onto span$\{x^{\mu_1}, \ldots, x^{\mu_K}\}$.

\textbf{Step 1: Gram matrix.}
The Gram matrix has entries:
\[
G_{jk} = \langle x^{\mu_j}, x^{\mu_k} \rangle_{L^2} = \int_0^1 x^{\mu_j + \mu_k} dx = \frac{1}{\mu_j + \mu_k + 1}.
\]

\textbf{Step 2: Projection coefficients.}
The right-hand side vector is:
\[
b_k = \langle f, x^{\mu_k} \rangle = \int_0^1 x^{\alpha + \mu_k} dx = \frac{1}{\alpha + \mu_k + 1}.
\]
Optimal coefficients satisfy $G\ba^* = \bb$.

\textbf{Step 3: Residual norm.}
The squared projection error is:
\[
\|f - \hat{f}\|_{L^2}^2 = \|f\|_{L^2}^2 - \bb^\top G^{-1} \bb = \frac{1}{2\alpha + 1} - \bb^\top G^{-1} \bb.
\]

\textbf{Step 4: Explicit formula for $K=1$.}
With single exponent $\mu$, we have $G = 1/(2\mu+1)$ and $b = 1/(\alpha+\mu+1)$, so
\[
a^* = b/G = \frac{2\mu+1}{\alpha+\mu+1}.
\]
The squared error is:
\begin{align*}
\|f - a^* x^\mu\|_{L^2}^2 &= \|f\|_{L^2}^2 - \frac{b^2}{G} = \frac{1}{2\alpha+1} - \frac{(2\mu+1)}{(\alpha+\mu+1)^2} \\
&= \frac{(\alpha+\mu+1)^2 - (2\mu+1)(2\alpha+1)}{(2\alpha+1)(\alpha+\mu+1)^2} \\
&= \frac{(\alpha - \mu)^2}{(2\alpha+1)(\alpha+\mu+1)^2}.
\end{align*}

If $\mu = \alpha$, the numerator vanishes: error is exactly zero.
If $\mu = \alpha + \delta$ for small $\delta$, the error is $\mathcal{O}(\delta^2)$.

\textbf{Step 5: General $K$.}
For multiple exponents, the product formula \eqref{eq:approx-rate} follows from the determinant expression for projection residuals. The squared distance to span$\{x^{\mu_1}, \ldots, x^{\mu_K}\}$ can be written as a ratio of Gram determinants. Using the Cauchy determinant formula for matrices of the form $G_{jk} = 1/(\mu_j + \mu_k + 1)$, one obtains the stated product expression; see \citet{borwein1995polynomials} for background on Müntz polynomial approximation theory.
\end{proof}

\subsection{Proof of Proposition~\ref{prop:reg-effect} (Regularizer Effect)}
\label{app:proof-reg}

\begin{proof}
\textbf{(1) Small exponents.}
The divergence $D = \sum_k 1/\mu_k$ increases when $\mu_k$ decreases. To satisfy $D \geq C$ (making $\mathcal{R} = 0$), the optimizer pushes some exponents toward small values.

\textbf{(2) Diversity.}
For $K$ exponents all equal to $\mu^*$: $D = K/\mu^*$.
For diverse exponents $\{\mu_1, \ldots, \mu_K\}$ with $\mu_1 < \mu^* < \mu_K$:
\[
D = \sum_k \frac{1}{\mu_k} > \frac{K}{\mu^*}
\]
by convexity of $1/\mu$. Thus diversity increases $D$, helping satisfy the constraint.
\end{proof}

\section{Additional Experimental Details}
\label{app:experiments}

\subsection{Hyperparameter Settings}

\begin{table}[H]
\centering
\caption{Hyperparameters for all experiments.}
\label{tab:hyperparams}
\begin{tabular}{lcc}
\toprule
\textbf{Parameter} & \textbf{Supervised} & \textbf{PINN} \\
\midrule
Learning rate $\eta$ & $2 \times 10^{-3}$ & $2 \times 10^{-3}$ \\
Exponent LR multiplier & 1.0 & 0.02-0.1 \\
Training steps & 2,000 & 3,000-6,000 \\
Warmup steps & 0 & 500-1,500 \\
Exponent gradient clip $\delta$ & - & 0.03-0.1 \\
$K_e$ (even exponents) & 6 & 6 \\
$K_o$ (odd exponents) & 6 & 6 \\
$p_{\max}$ & 2.0-8.0 & 2.0-3.0 \\
$\beta_1$ (Müntz reg.) & $10^{-2}$ & $10^{-2}$ \\
$\beta_2$ ($L^1$ reg.) & $10^{-4}$ & $10^{-4}$ \\
$\lambda_{\text{BC}}$ & - & 200 (adaptive) \\
Collocation points & - & 2,048 \\
Boundary points & - & 256 \\
\bottomrule
\end{tabular}
\end{table}

\subsection{Computational Resources}

All experiments were conducted on a single NVIDIA A100 GPU. Supervised regression and singular ODE experiments complete in approximately 60-90 minutes total. The boundary-layer BVP experiments require 9-10 hours due to the increased training steps and multiple stiffness values. MSN training is slightly slower per iteration than MLP due to exp-log evaluation of $|x|^\mu$, but requires far fewer parameters for comparable accuracy.


\subsection{Additional Experimental Results}
\label{app:additional-results}

This section provides supplementary visualizations for the experiments presented in the main text.

\subsubsection{Supervised Regression}

\begin{figure}[H]
    \centering
    \begin{subfigure}[b]{0.48\textwidth}
        \centering
        \includegraphics[width=\textwidth]{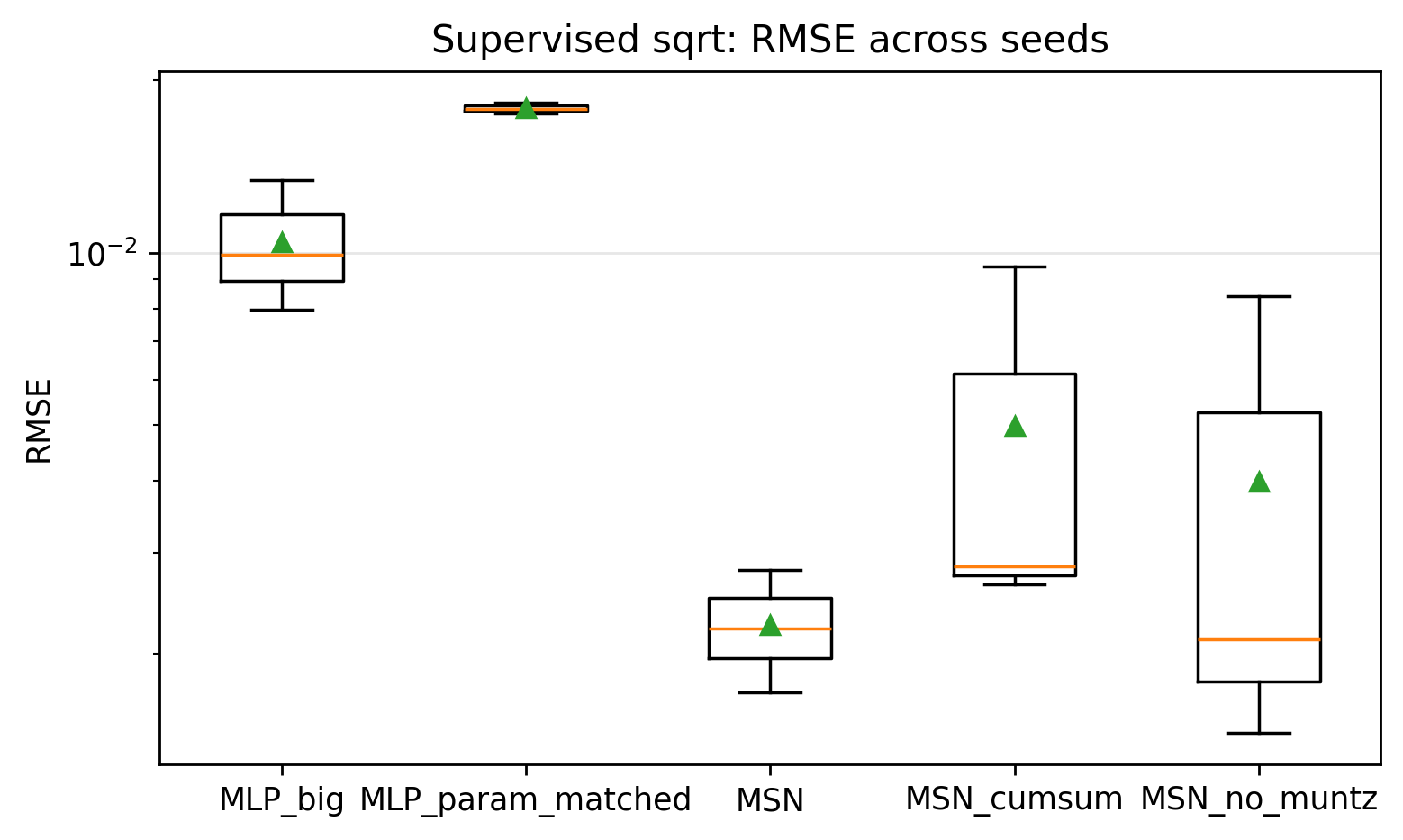}
        \caption{$\sqrt{x}$ task: RMSE distribution}
        \label{fig:app-sup-sqrt-box}
    \end{subfigure}
    \hfill
    \begin{subfigure}[b]{0.48\textwidth}
        \centering
        \includegraphics[width=\textwidth]{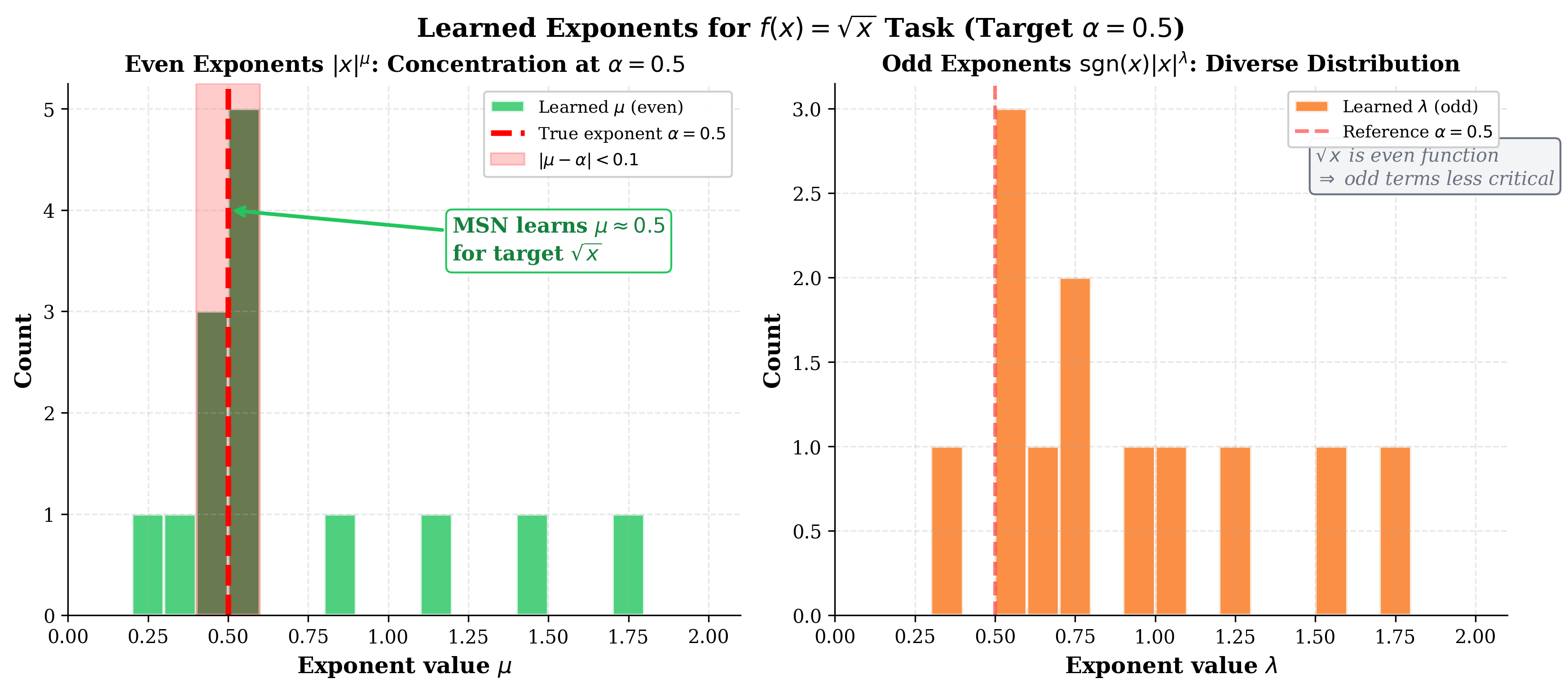}
        \caption{$\sqrt{x}$ task: Learned exponents (even and odd)}
        \label{fig:app-sup-sqrt-exp}
    \end{subfigure}
    
    \vspace{0.3cm}
    
    \begin{subfigure}[b]{0.48\textwidth}
        \centering
        \includegraphics[width=\textwidth]{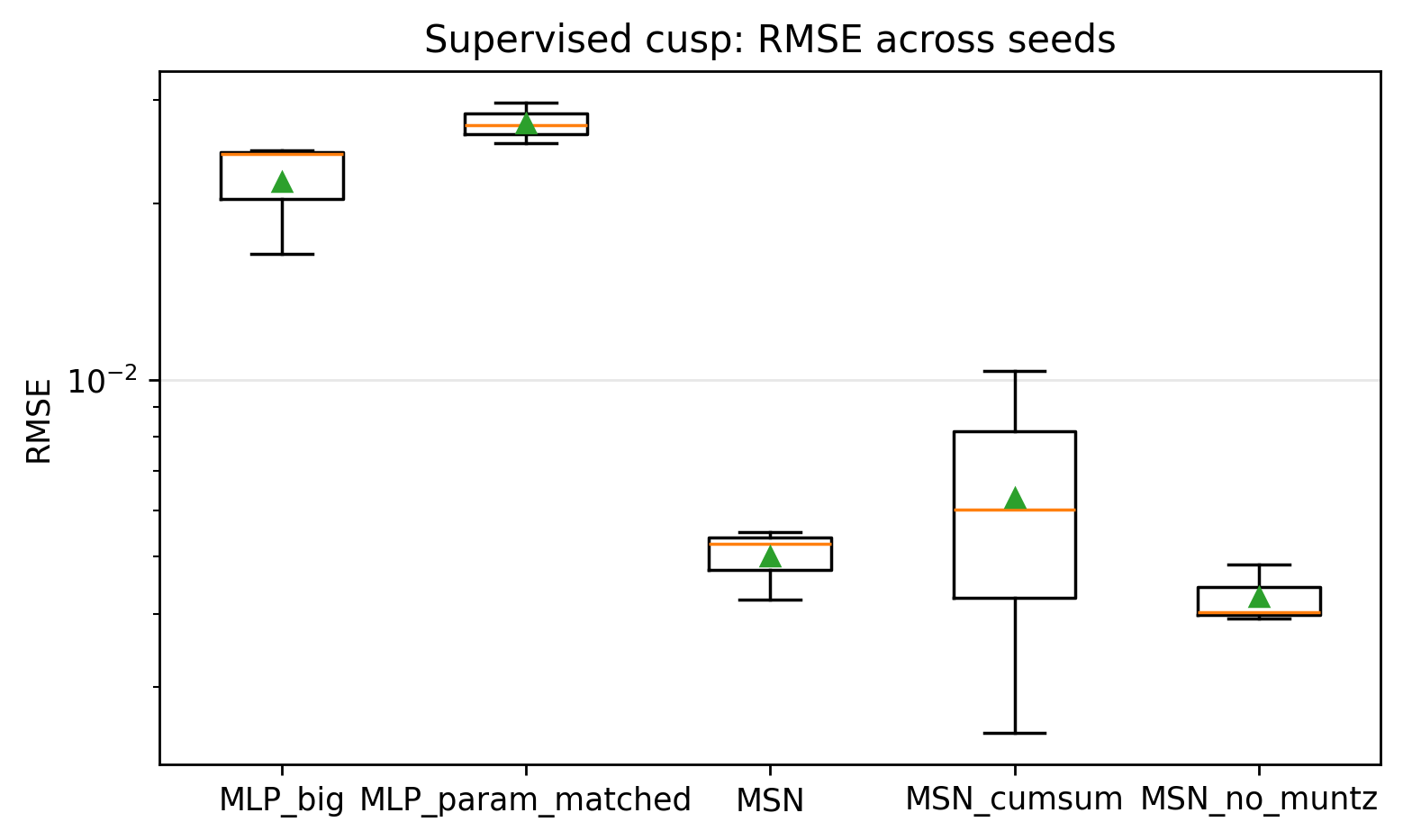}
        \caption{Cusp task: RMSE distribution}
        \label{fig:app-sup-cusp-box}
    \end{subfigure}
    \hfill
    \begin{subfigure}[b]{0.48\textwidth}
        \centering
        \includegraphics[width=\textwidth]{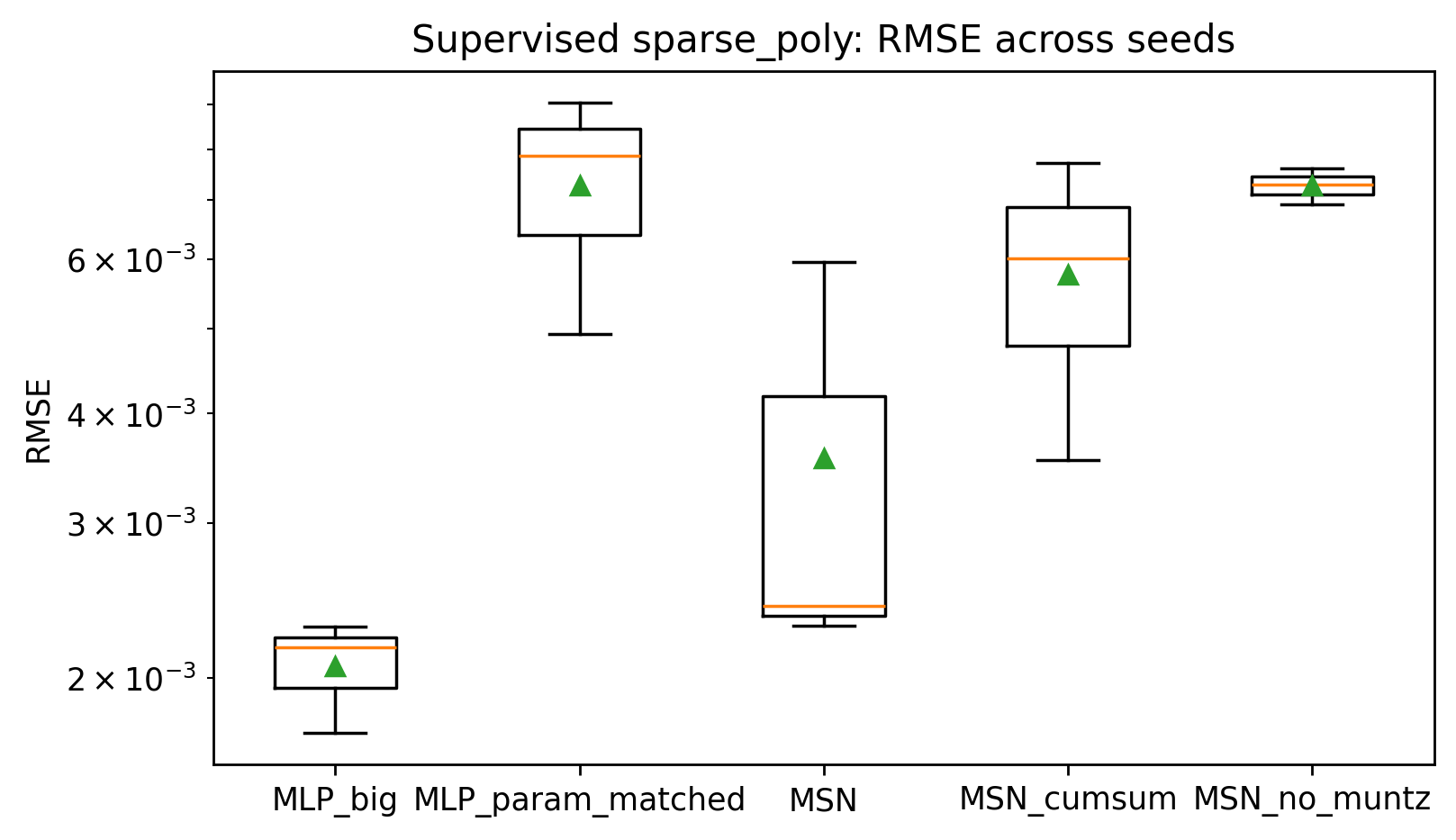}
        \caption{Sparse polynomial task: RMSE distribution}
        \label{fig:app-sup-poly-box}
    \end{subfigure}
    
    \caption{\textbf{Supervised regression: detailed results.} (a,c,d) RMSE distributions across 3 random seeds for each task, showing MSN's consistent advantage on singular functions ($\sqrt{x}$, cusp) and comparable performance on smooth functions (sparse polynomial). (b) Histogram of learned exponents for $\sqrt{x}$, showing concentration of even exponents near the true exponent $\mu = 0.5$, while odd exponents remain diverse (as expected for the even function $\sqrt{x}$).}
    \label{fig:app-supervised}
\end{figure}

\begin{figure}[H]
    \centering
    \includegraphics[width=\textwidth]{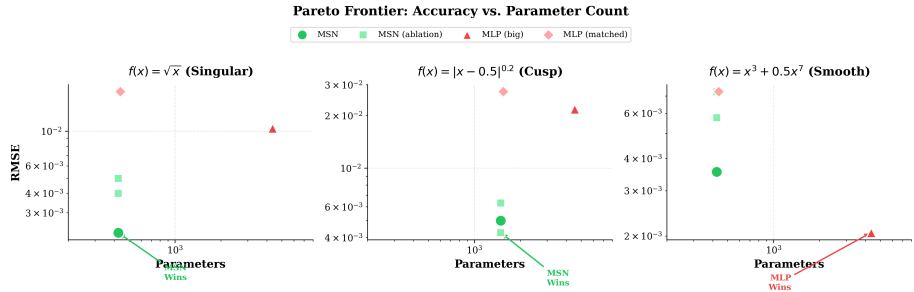}
    \caption{\textbf{Pareto frontier: accuracy vs.\ parameter count.} MSN achieves the best trade-off for singular functions ($\sqrt{x}$, cusp), reaching lower error with fewer parameters than both MLP variants. For the smooth polynomial task, MLP (big) achieves the best performance, confirming that MSN's advantage is specific to singular structure. Each subplot shows a different task with clear winner annotations.}
    \label{fig:app-pareto}
\end{figure}

\subsubsection{PINN: Singular ODE}

\begin{figure}[H]
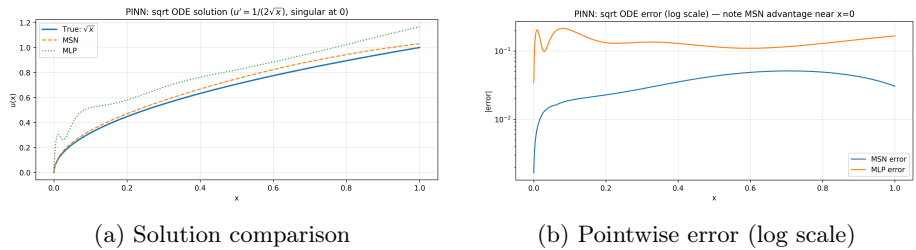

    \centering
    \begin{subfigure}[b]{0.48\textwidth}
        \centering
        \includegraphics[width=\textwidth]{figures/pinn_sqrt_solution.png}
        \caption{Solution comparison}
        \label{fig:app-sqrt-sol}
    \end{subfigure}
    \hfill
    \begin{subfigure}[b]{0.48\textwidth}
        \centering
        \includegraphics[width=\textwidth]{figures/pinn_sqrt_error.png}
        \caption{Pointwise error (log scale)}
        \label{fig:app-sqrt-err}
    \end{subfigure}
    
    \caption{\textbf{Singular ODE $u' = 1/(2\sqrt{x})$: solution quality.} (a) MSN closely tracks the true $\sqrt{x}$ solution, while MLP exhibits visible deviation. (b) Pointwise error on log scale shows MSN achieves lower error across the domain, particularly near the singularity at $x = 0$.}
    \label{fig:app-sqrt-solution}
\end{figure}

\begin{figure}[H]
    \centering
    \begin{subfigure}[b]{0.48\textwidth}
        \centering
        \includegraphics[width=\textwidth]{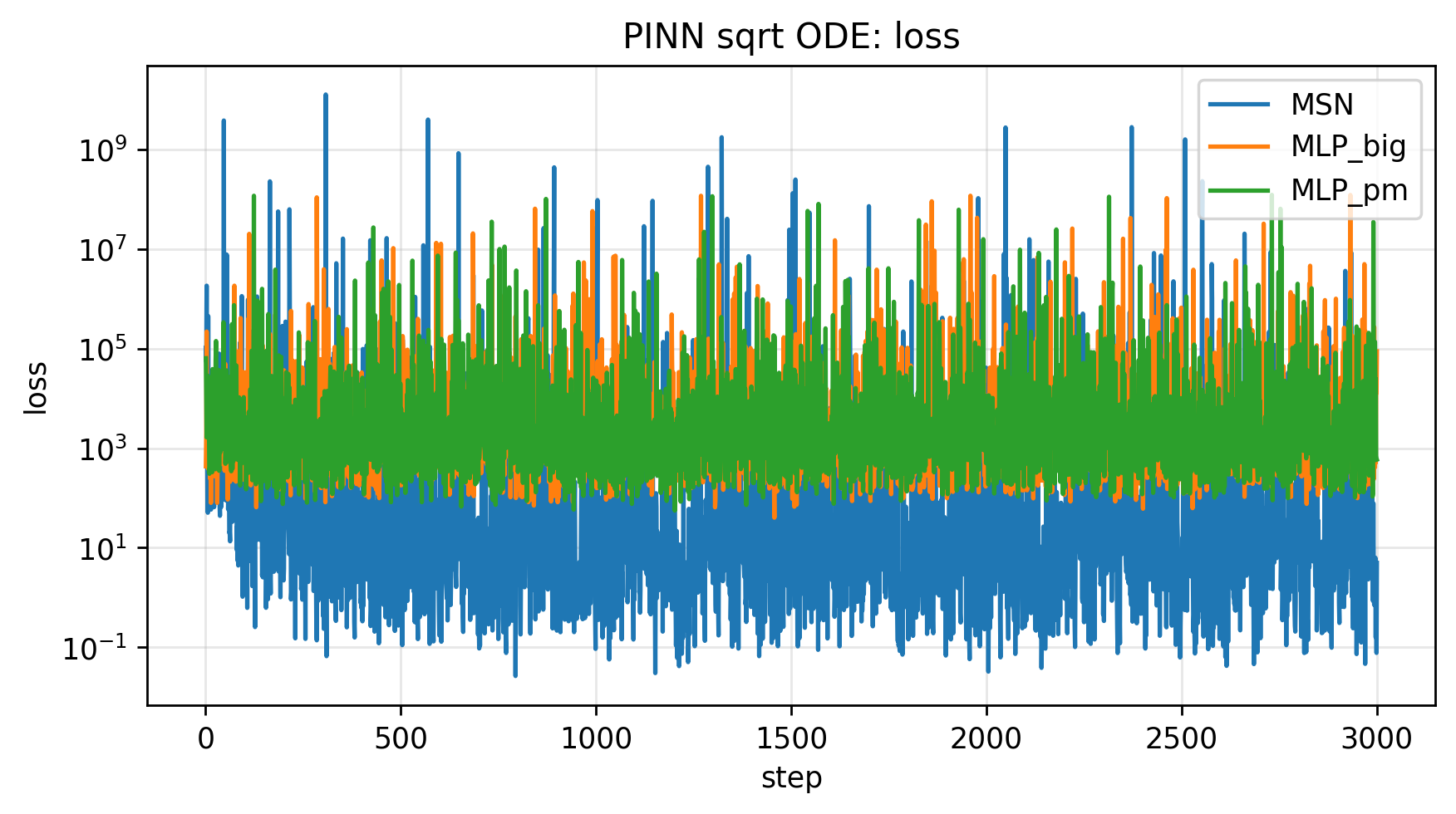}
        \caption{Total loss during training}
        \label{fig:app-sqrt-loss}
    \end{subfigure}
    \hfill
    \begin{subfigure}[b]{0.48\textwidth}
        \centering
        \includegraphics[width=\textwidth]{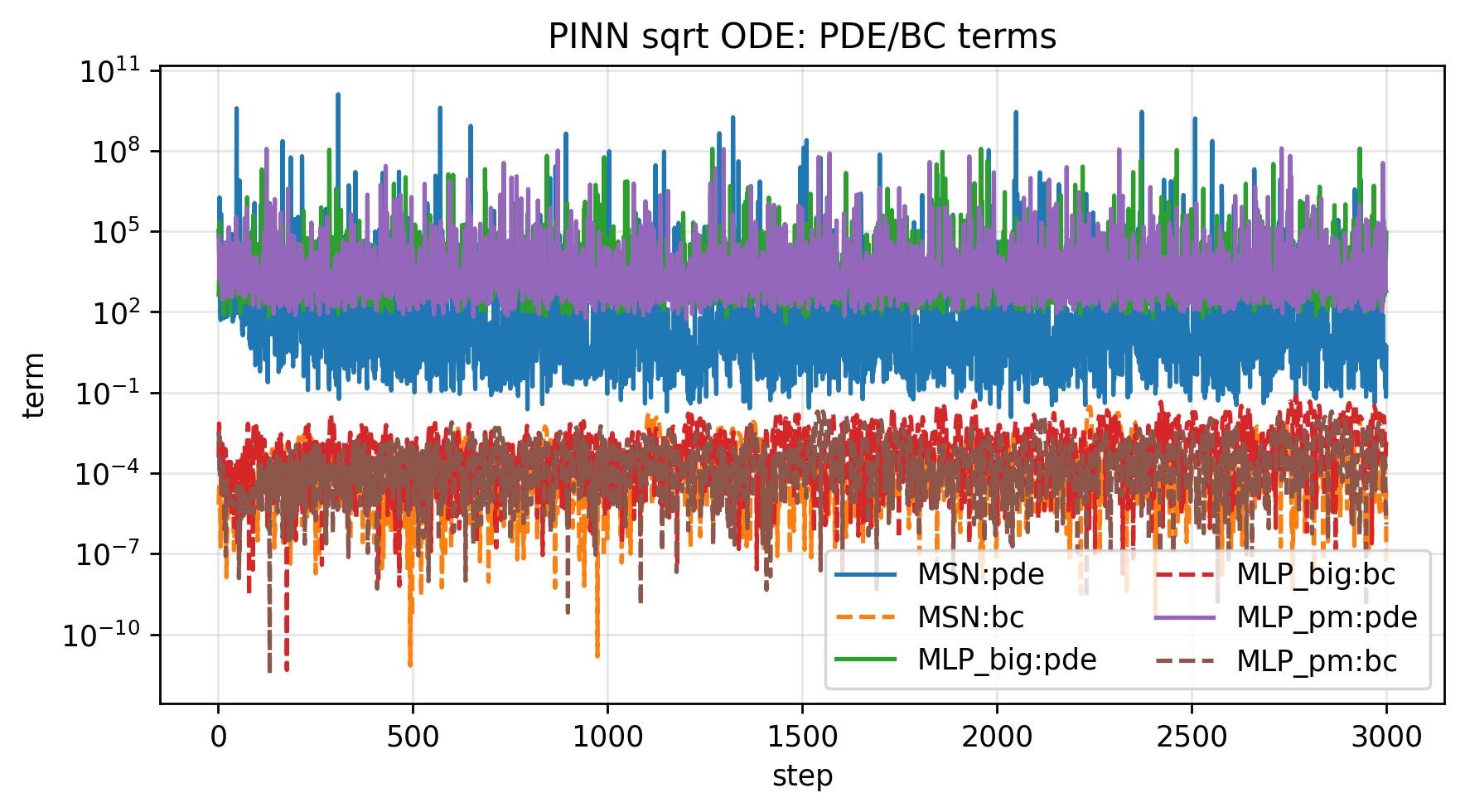}
        \caption{PDE and BC loss components}
        \label{fig:app-sqrt-terms}
    \end{subfigure}
    
    \vspace{0.3cm}
    
    \begin{subfigure}[b]{0.48\textwidth}
        \centering
        \includegraphics[width=\textwidth]{figures/pinn_sqrt_exponent_hist.png}
        \caption{Learned exponent distribution}
        \label{fig:app-sqrt-exp}
    \end{subfigure}
    \hfill
    \begin{subfigure}[b]{0.48\textwidth}
        \centering
        \includegraphics[width=\textwidth]{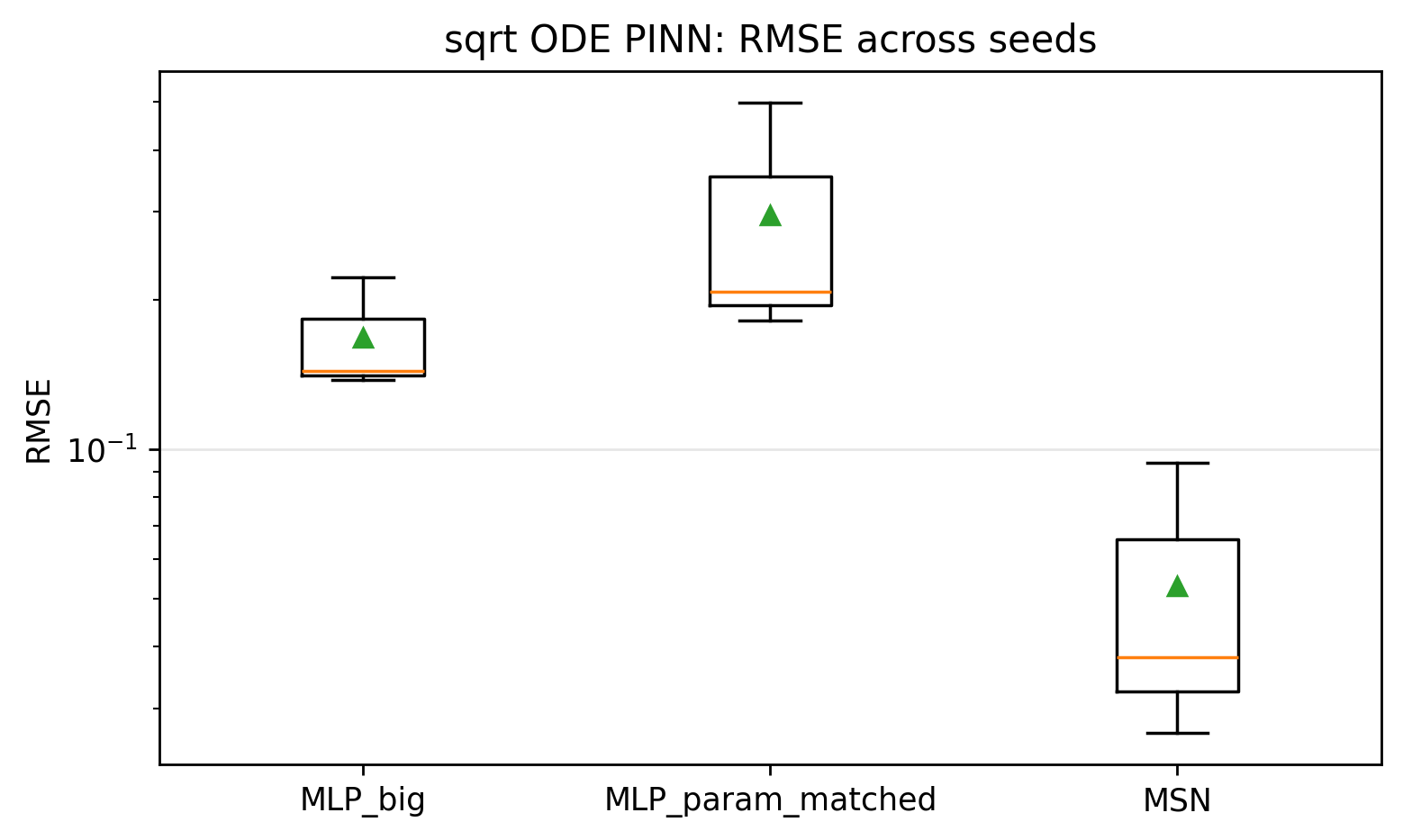}
        \caption{RMSE across random seeds}
        \label{fig:app-sqrt-box}
    \end{subfigure}
    
    \caption{\textbf{Singular ODE: training dynamics.} (a) MSN converges to lower total loss than MLP. (b) Both PDE residual and boundary condition losses decrease faster for MSN. (c) Learned exponents concentrate near $\mu \approx 0.5$, matching the true solution structure. (d) RMSE distribution across 3 seeds confirms consistent MSN advantage.}
    \label{fig:app-sqrt-training}
\end{figure}

\subsubsection{PINN: Boundary-Layer BVP}

\begin{figure}[H]
    \centering
    \begin{subfigure}[b]{0.48\textwidth}
        \centering
        \includegraphics[width=\textwidth]{figures/bl_eps0.05_solution.png}
        \caption{Solution at $\epsilon = 0.05$}
        \label{fig:app-bl-sol}
    \end{subfigure}
    \hfill
    \begin{subfigure}[b]{0.48\textwidth}
        \centering
        \includegraphics[width=\textwidth]{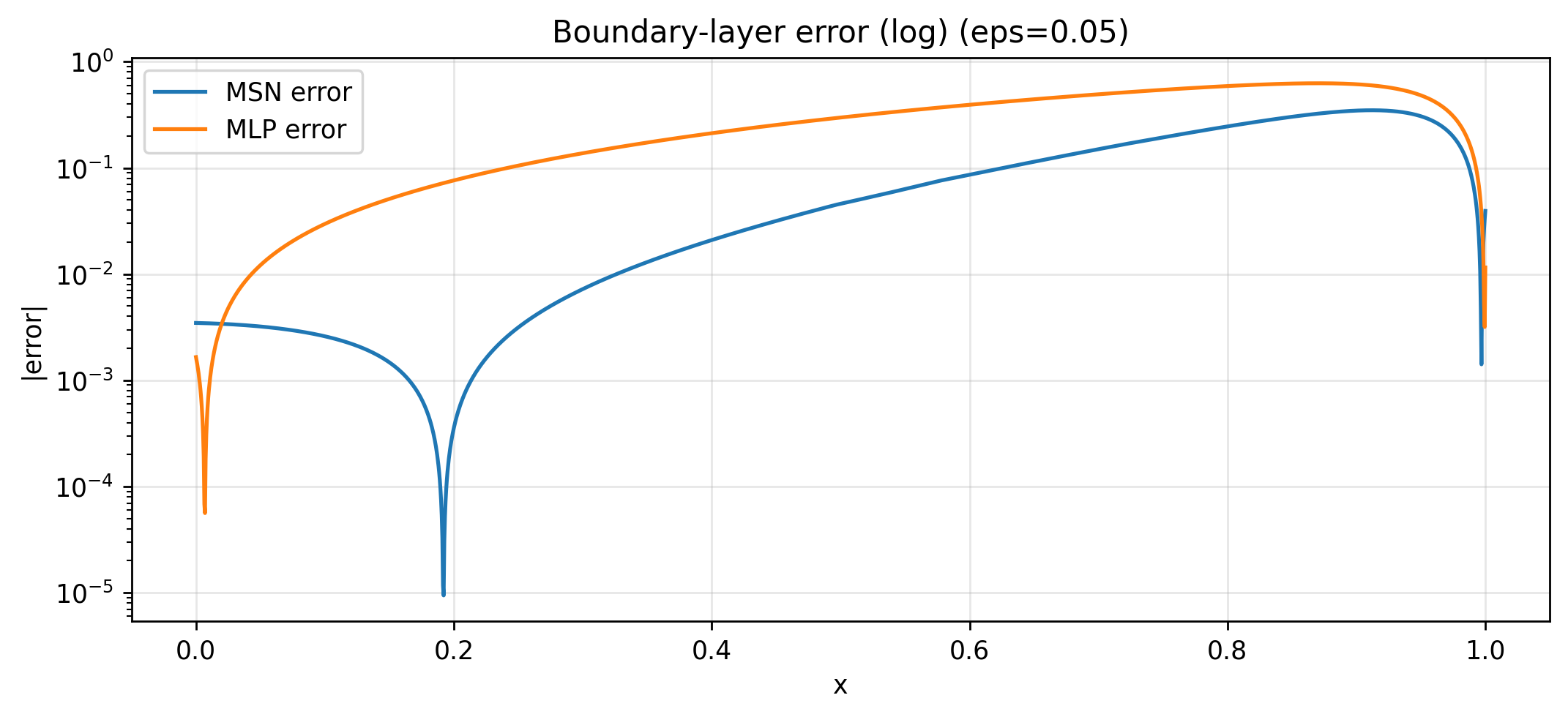}
        \caption{Pointwise error at $\epsilon = 0.05$}
        \label{fig:app-bl-err}
    \end{subfigure}
    
    \caption{\textbf{Boundary-layer BVP: solution quality.} (a) MSN captures the sharp boundary layer near $x = 1$ more accurately than MLP. (b) Error is concentrated in the boundary layer region for both methods, but MSN achieves lower peak error.}
    \label{fig:app-bl-solution}
\end{figure}

\begin{figure}[H]
    \centering
    \begin{subfigure}[b]{0.48\textwidth}
        \centering
        \includegraphics[width=\textwidth]{figures/bl_eps0.05_exponent_traj.png}
        \caption{Exponent trajectories during training}
        \label{fig:app-bl-traj}
    \end{subfigure}
    \hfill
    \begin{subfigure}[b]{0.48\textwidth}
        \centering
        \includegraphics[width=\textwidth]{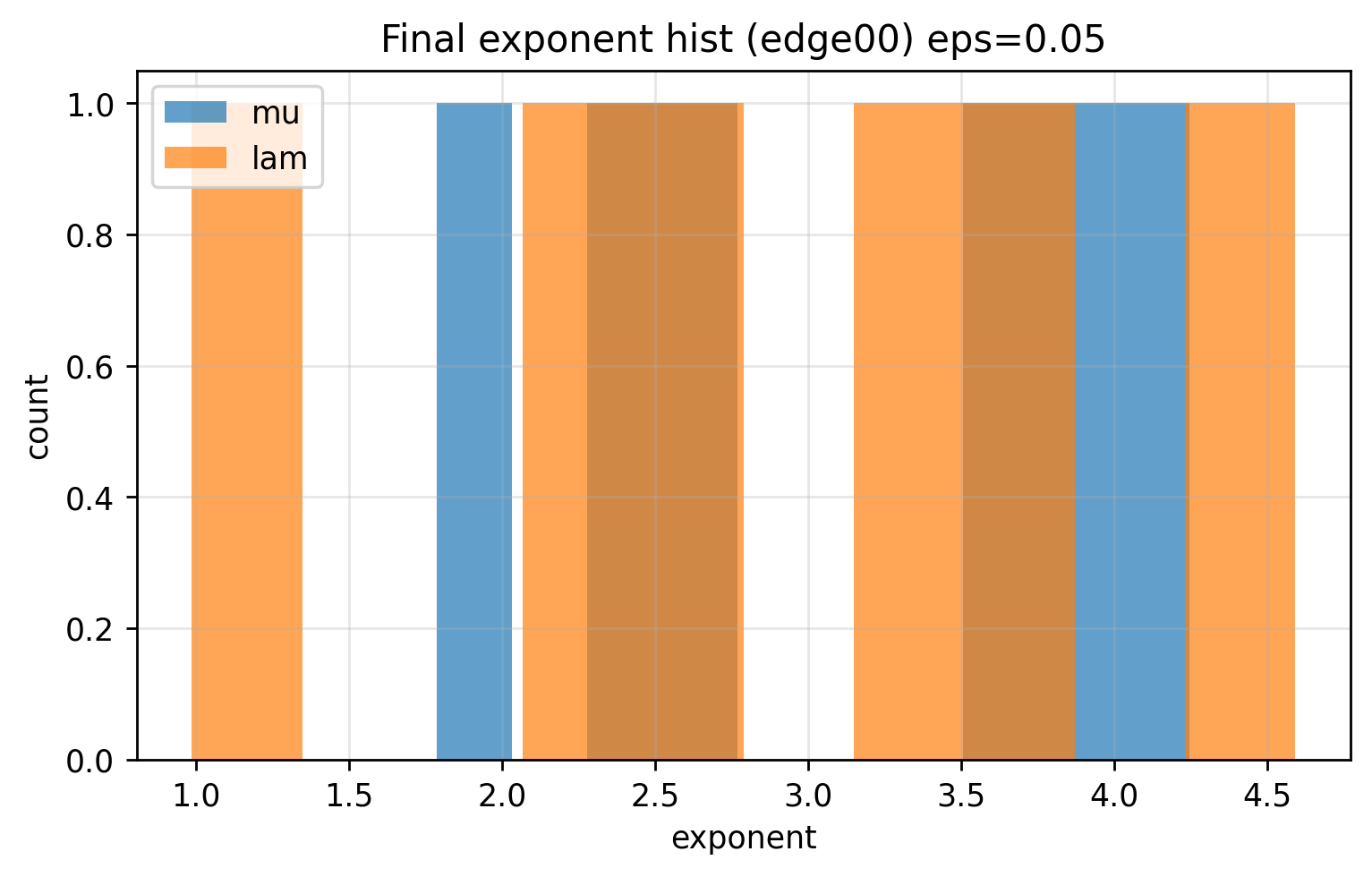}
        \caption{Final exponent distribution}
        \label{fig:app-bl-hist}
    \end{subfigure}
    
    \caption{\textbf{Boundary-layer BVP: exponent dynamics.} (a) Exponents evolve during training, initially exploring before converging to stable values. The warmup period (first 500 steps) freezes exponents to allow coefficient adaptation. (b) Final exponent distribution shows diversity, with some exponents adapting to capture the boundary layer structure.}
    \label{fig:app-bl-exponents}
\end{figure}

\subsubsection{Gradient Diagnostics}

\begin{figure}[H]
    \centering
    \includegraphics[width=0.7\textwidth]{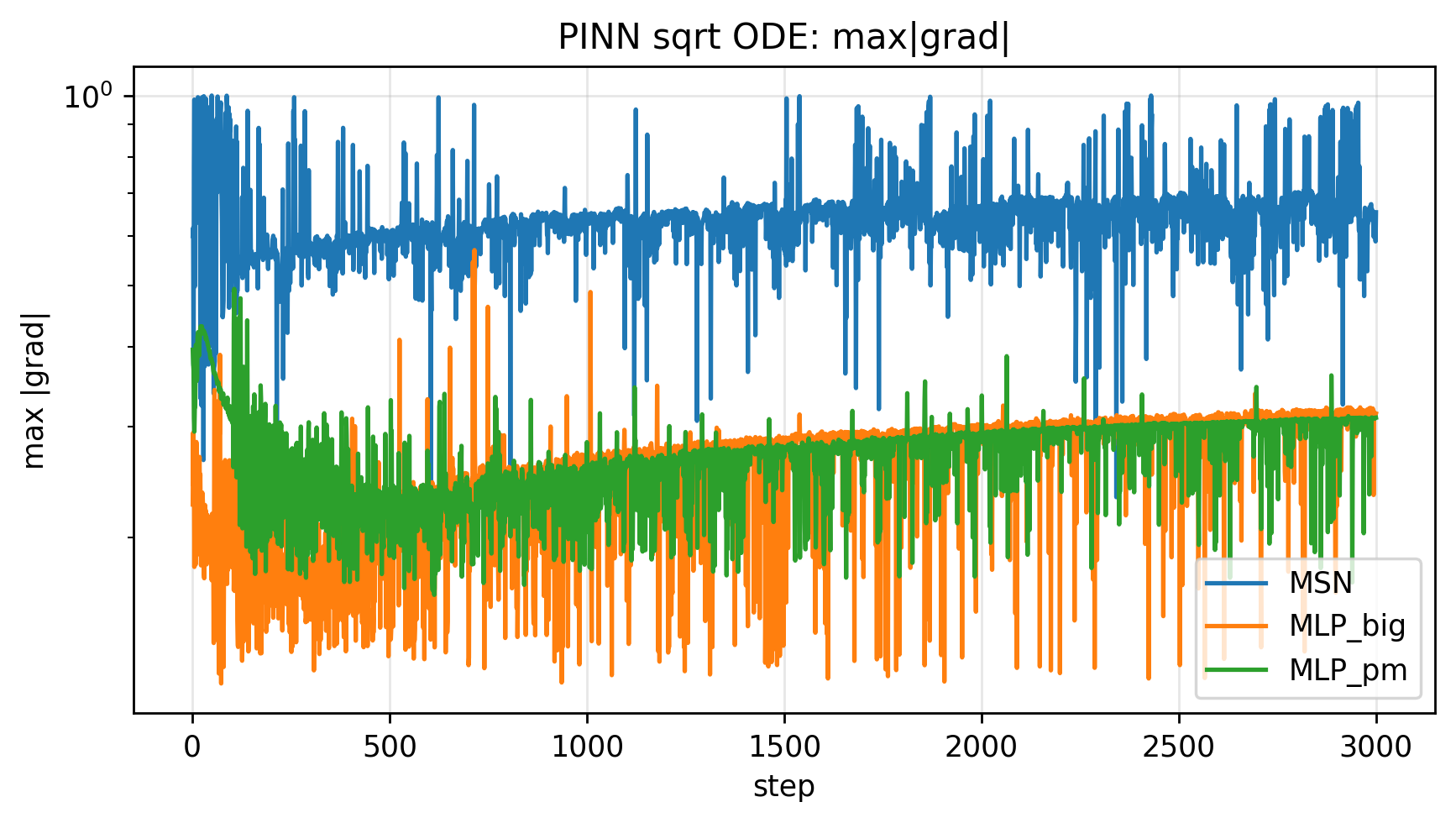}
    \caption{\textbf{Gradient diagnostics for singular ODE.} Gradient norms for coefficient parameters and exponent parameters during training. The two-time-scale optimization (exponent learning rate = $0.02\times$ coefficient learning rate) and gradient clipping maintain stable training despite the challenging singular structure.}
    \label{fig:app-grad}
\end{figure}

\end{document}